\declaretheoremstyle[headfont=\itshape]{normalhead}
\declaretheorem[style=normalhead]{theorem}
\declaretheorem[style=normalhead]{remark}
\declaretheorem[style=normalhead]{definition}
\declaretheorem[style=normalhead]{lemma}
\newcommand{\OurMethod}{Selective-FD }
\def\hhat{\hat{\bm{h}}}
\def\Dk{\mathcal{D}_{k}}
\def\Dkhat{\hat{\mathcal{D}}_{k}}
\def\Dtest{\mathcal{D}_{\text{test}}}
\def\Dproxy{\mathcal{D}_{\text{proxy}}}
\def\Dproxyhat{\hat{\mathcal{D}}_{\text{proxy}}}
\def\tProxy{\text{proxy}}
\def\hProxyStar{\bm{h}_{\text{proxy}}^{*}}
\def\hProxyHatStar{\hat{\bm{h}}_{\text{proxy}}^{*}}
\title{Selective Knowledge Sharing for Privacy-Preserving Federated Distillation without A Good Teacher}
\author[1]{Jiawei Shao}
\author[2,*]{Fangzhao Wu}
\author[1,*]{Jun Zhang}
\affil[1]{Hong Kong University of Science and Technology, Hong Kong, China}
\affil[2]{Microsoft Research Asia, Beijing, China}
\affil[*]{correspondence: Jun Zhang (eejzhang@ust.hk), Fangzhao Wu (wufangzhao@gmail.com)}
\begin{abstract}

While federated learning (FL) is promising for efficient collaborative learning without revealing local data, it remains vulnerable to white-box privacy attacks, suffers from high communication overhead, and struggles to adapt to heterogeneous models.
Federated distillation (FD) emerges as an alternative paradigm to tackle these challenges, which transfers knowledge among clients instead of model parameters.
Nevertheless, challenges arise due to variations in local data distributions and the absence of a well-trained teacher model, which leads to misleading and ambiguous knowledge sharing that significantly degrades model performance.
To address these issues, this paper proposes a \emph{selective knowledge sharing} mechanism for FD, termed \emph{Selective-FD}, to identify accurate and precise knowledge from local and ensemble predictions, respectively.
Empirical studies, backed by theoretical insights, demonstrate that our approach enhances the generalization capabilities of the FD framework and consistently outperforms baseline methods.
We anticipate our study to enable a privacy-preserving, communication-efficient, and heterogeneity-adaptive federated training framework.

\end{abstract}
\begin{document}

\flushbottom
\maketitle
\clearpage

\thispagestyle{empty}


\begin{figure}[!t]
  \centering
    \includegraphics[width=0.99\textwidth]{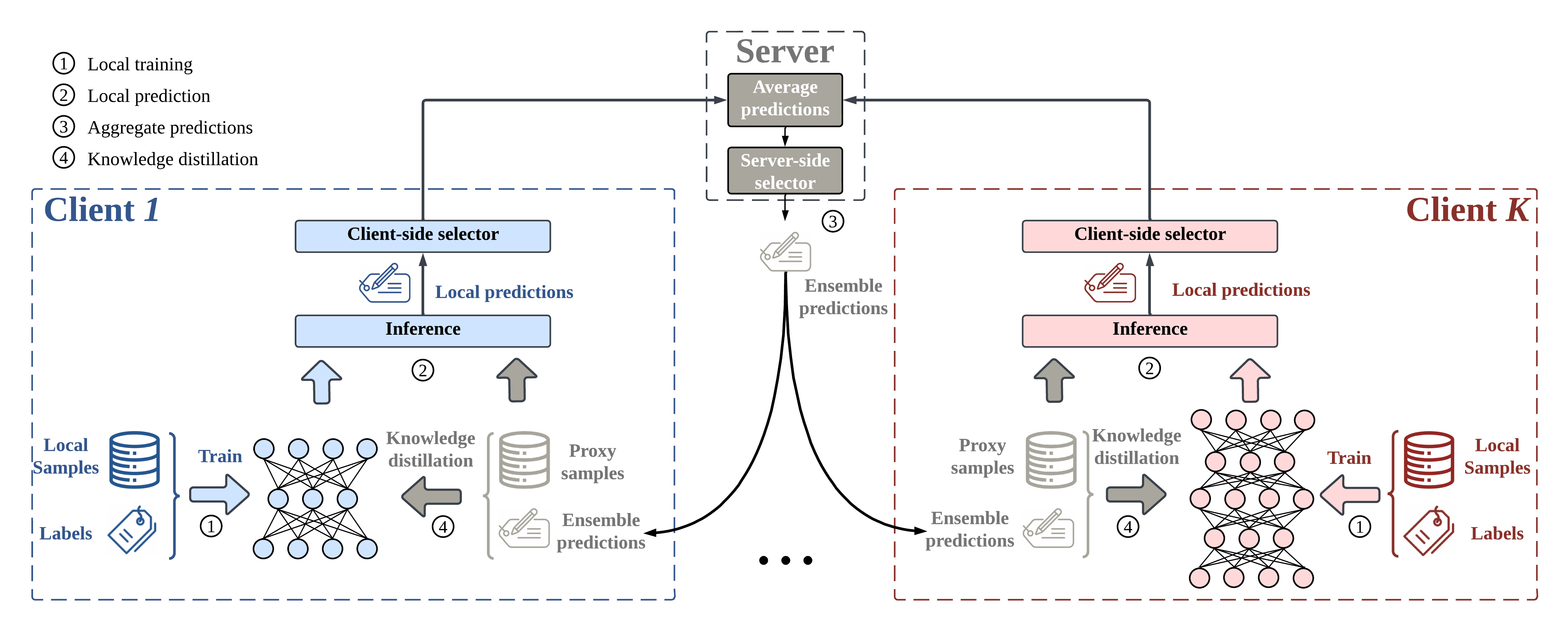}

\caption{\textbf{The overall framework of Selective-FD.} 
The federated distillation involves four iterative steps. 
First, each client trains a personalized model using its local private data. Second, each client predicts the label of the proxy samples based on the local model. Third, the server aggregates these local predictions and returns the ensemble predictions to clients. Fourth, clients update local models by knowledge distillation based on the ensemble predictions.
During the training process, the client-side selectors and the server-side selector aim to filter out misleading and ambiguous knowledge from the local predictions.
}
\label{fig:sksfd_framework}
\vspace{-0.5cm}
\end{figure}

\begin{figure}[!t]
  \centering
    \includegraphics[width=0.99\textwidth]{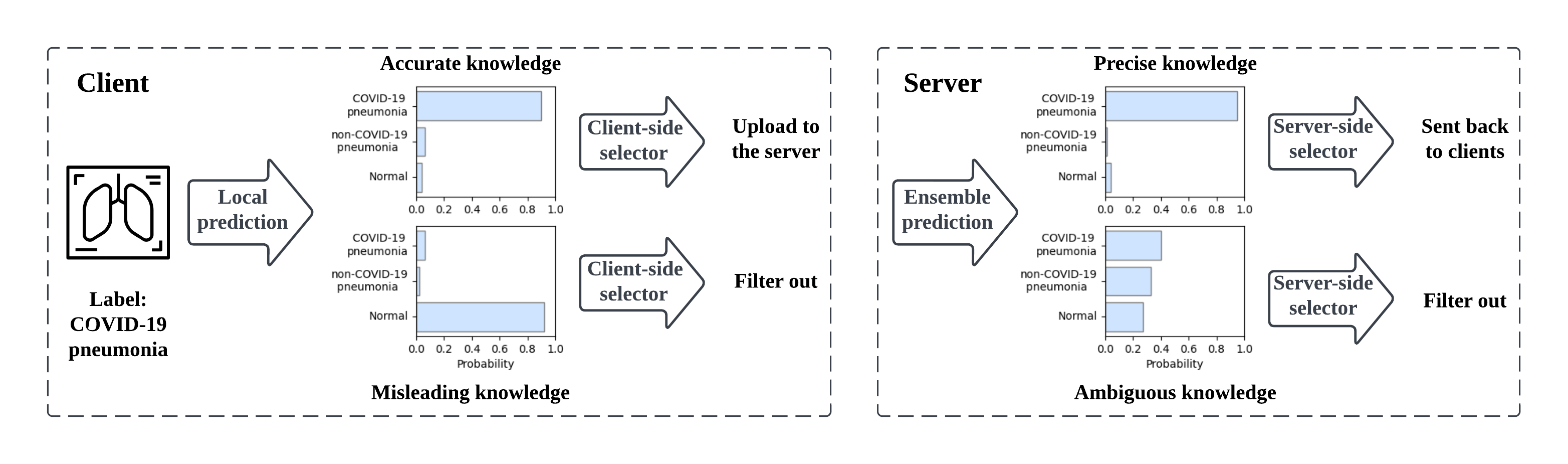}

\caption{
Predict the label of a proxy sample on the client side (left) and the server side (right).
The prediction takes the form of a soft label (i.e., a logits vector), where each element represents the probability of the corresponding label.
The predicted label is the element with the highest probability. 
We measure the quality of knowledge in federated distillation by accuracy and precision.
The accurate prediction matches the ground truth label, while misleading knowledge does not. 
Meanwhile, precise knowledge has low entropy, while ambiguity implies high entropy and uncertainty.
The client-side selectors are responsible for filtering out incorrect local predictions, while the server-side selector aims to eliminate ambiguous knowledge.
}
\label{fig:knowledge_and_selector}
\end{figure}

\section*{Introduction}

The rapid development of deep learning (DL)\cite{lecun2015deepLearn} has paved the way for its widespread adoption across various application domains, including medical image processing\cite{razzak2018deep_medical_image_processing}, intelligent healthcare \cite{coronato2020reinforcement_health_care}, and robotics\cite{murphy2019introductionrobotics}. 
The key ingredients of DL include massive datasets and powerful computing platforms, which makes centralized training a typical approach for building high-performing models.
However, regulations such as the General Data Protection Regulation (GDPR)~\cite{voigt2017eu_GDPR} and California Consumer Privacy Act (CCPA)~\cite{pardau2018california_CCPA} have been implemented to limit data collection and storage since the data may contain sensitive personal information. 
For instance, collecting chest X-ray images from multiple hospitals to curate a large dataset for pneumonia detection is practically difficult, since it would violate patient privacy laws and regulations such as Health Insurance Portability and Accountability Act (HIPAA)~\cite{act1996health_HIPAA}.
While these restrictions as well as regulations are essential for privacy protection, they hinder the utilization of centralized training in practice.
Meanwhile, many domains face the ``data islands'' problem. For instance, individual hospitals may possess only a limited number of data samples for rare diseases, which makes it difficult to develop accurate and robust models.

Federated learning (FL)\cite{FedAvg_mcmahan2017communication,li2020federatedProx,shao2023survey} is a promising technique that can effectively utilize distributed data while preserving privacy.
In particular, multiple data-owning clients collaboratively train a DL model by updating models locally on private data and aggregating them globally.
These two steps iterate many times until convergence, while private data is kept local.
Despite many benefits, FL faces challenges and poses inconveniences.
Specifically, the periodical model exchange in FL entails communication overhead that scales up with the model size. 
This prohibits the use of large-sized models in FL\cite{luping2019cmfl_comm_efficient,liu2020client}, which severely limits the model accuracy.
Besides, standard federated training methods enforce local models to adopt the same architecture, which cannot adapt well to heterogeneous clients equipped with different computation resources\cite{li2019fedmd,dennis2021heterogeneity}.
Furthermore, although the raw data are not directly shared among clients, the model parameters may encode private information about datasets. 
This makes the shared models vulnerable to white-box privacy attacks\cite{nasr2019comprehensive_white_box,huang2021evaluating_gradient_inversion}.

To resolve the above difficulties of FL requires us to rethink the fundamental problem of privacy-preserving collaborative learning, which is to effectively share \emph{knowledge} among distributed clients while preserving privacy.
Knowledge distillation (KD) \cite{gou2021knowledge,zhang2022ideal} is an effective technique for transferring knowledge from well-trained teacher models to student models by leveraging proxy samples. 
The inference results by the teacher models on the proxy samples represent \emph{privileged} knowledge, which supervises the training of the student models.
In this way, high-quality student models can be obtained without accessing the training data of the teacher models.
Applying KD to collaborative learning gives rise to a paradigm called federated distillation (FD)\cite{sui2020feded,DS_FL_itahara2021distillation,liu2022communication,qidifferentially}, where the ensemble of clients' local predictions on the proxy samples serves as privileged knowledge.
By sharing the hard labels (i.e., predicted results)\cite{papernot2016semi} of proxy samples instead of model parameters, the FD framework largely reduces the communication overhead, can support heterogeneous local models, and is free from white-box privacy attacks.
Nevertheless, without a well-trained teacher, FD relies on the ensemble of local predictors for distillation, making it sensitive to the training state of local models, which may suffer from poor quality and underfitting.
Besides, the non-identically independently distributed (non-IID) data distributions\cite{zhu2021federated,shaodres} across clients exacerbate this issue, since the local models cannot output accurate predictions on the proxy samples that are outside their local distributions\cite{wang2022knowledgeSelection}.
To address the negative impact of misleading knowledge, an alternative is to incorporate soft labels (i.e., normalized logits)\cite{gou2021knowledge} during knowledge distillation to enhance the generalization performance. 
Soft labels provide rich information about the relative similarity between different classes, enabling student models to generalize effectively to unseen examples.
However, a previous study \cite{DS_FL_itahara2021distillation} pointed out that ensemble predictions may be ambiguous and exhibit high entropy when local predictions of clients are inconsistent. 
Sharing soft labels can exacerbate this problem as they are less certain than hard labels.
The misleadingness and ambiguity harm the knowledge distillation and local training.
For instance, in our experiments on image classification tasks, existing FD methods barely outperform random guessing under highly non-IID distributions.

This work aims to tackle the challenge of knowledge sharing in FD without a good teacher, and our key idea is to filter out misleading and ambiguous knowledge.
We propose a \emph{selective knowledge sharing} mechanism in federated distillation (named \emph{Selective-FD}) to identify accurate and precise knowledge during the federated training process.
As shown in Fig. \ref{fig:sksfd_framework} and Fig. \ref{fig:knowledge_and_selector}, this mechanism comprises client-side selectors and a server-side selector. 
At each client, we construct a selector to identify out-of-distribution (OOD) samples\cite{devries2018learning_OOD,liu2020energy_OOD} from the proxy dataset based on the density-ratio estimation \cite{kanamori2012statistical_kulsif}.
This method detects outliers by quantifying the difference in densities between the inlier distribution and outlier distribution.
If the density ratio of a particular sample is close to zero, the client considers it an outlier and refrains from sharing the predicted result to prevent misleading other clients.
On the server side, we average the uploaded predictions from the clients and filter out the ensemble predictions with high entropy.
The other ensemble predictions are then returned to the clients for knowledge distillation. 
We provide theoretical insights to demonstrate the impact of our selective knowledge sharing mechanism on the training convergence, and we evaluate Selective-FD in two applications, including a pneumonia detection task and three benchmark image classification tasks. 
Extensive experimental results show that Selective-FD excels in handling non-IID data and significantly improves test accuracy compared to the baselines.
Remarkably, Selective-FD with hard labels achieves performance close to the one sharing soft labels.
Furthermore, our proposed Selective-FD significantly reduces the communication cost during federated training compared with the conventional FedAvg approach.
We anticipate that the proposed method will serve as a valuable tool for training large models in the federated setting for future applications.

\section*{Results}\label{sec:Experiments}

\subsection*{Performance Evaluation}

The experiments are conducted on a pneumonia detection task \cite{Wang2020_covidx} and three benchmark image classification tasks \cite{lecun1998gradient_MNIST,xiao2017fashionMNIST,krizhevsky2009learning_CIFAR}.
The pneumonia detection task aims to detect pneumonia from chest X-ray images. 
This task is based on the COVIDx dataset~\cite{Wang2020_covidx} that contains three classes, including normal people, non-COVID-19 infection, and COVID-19 viral infection.
We consider four clients, e.g., hospitals, in the federated distillation framework.
To simulate the non-IID data across clients, each of them only has one or two classes of chest X-ray images, and each class contains 1,000 images.
Besides, we construct a proxy dataset for knowledge transfer, which contains all three classes, and each class has 500 unlabeled samples.
The non-IID data distribution is visualized in Fig.~\ref{fig:covidx}.
The test dataset consists of 100 normal images and 100 images of pneumonia infection, where half of the pneumonia infections are non-COVID-19 infections and the other half are COVID-19 infections.
Moreover, we evaluate the proposed method on three benchmark image datasets, including MNIST~\cite{lecun1998gradient_MNIST}, Fashion MNIST~\cite{xiao2017fashionMNIST}, and CIFAR-10~\cite{krizhevsky2009learning_CIFAR}.
The datasets consist of ten classes, each with over 50,000 training samples. 
To transfer knowledge in a federated distillation setting, we randomly select 10\% to 20\% of the training data from each class as unlabeled proxy samples. 
In the experiments, ten clients participate in the distillation process, and we evaluate the model's performance under two non-IID distribution settings: a strong non-IID setting and a weak non-IID setting, where each client has one unique class and two classes, respectively.
Several representative federated distillation methods are compared, including FedMD~\cite{li2019fedmd}, FedED~\cite{sui2020feded}, DS-FL~\cite{DS_FL_itahara2021distillation}, FKD~\cite{FKD_seo202216}, and PLS~\cite{wang2022knowledgeSelection}.
Among them, FedMD, FedED, and DS-FL rely on a proxy dataset to transfer knowledge, while FKD and PLS are data-free KD approaches that share class-wise average predictions among users.
Besides, we report the performance of independent learning (abbreviated as IndepLearn), where each client trains the model on its local dataset.
The comparison includes the results of sharing hard labels (predicted labels) and soft labels (normalized logits). 
To evaluate the training performance of these methods, we report the classification accuracy on the test set as a metric.
More details of the datasets and model structures are deferred to Supplementary Information.

\begin{figure}[!t]
  \centering
\includegraphics[width=0.48\textwidth]{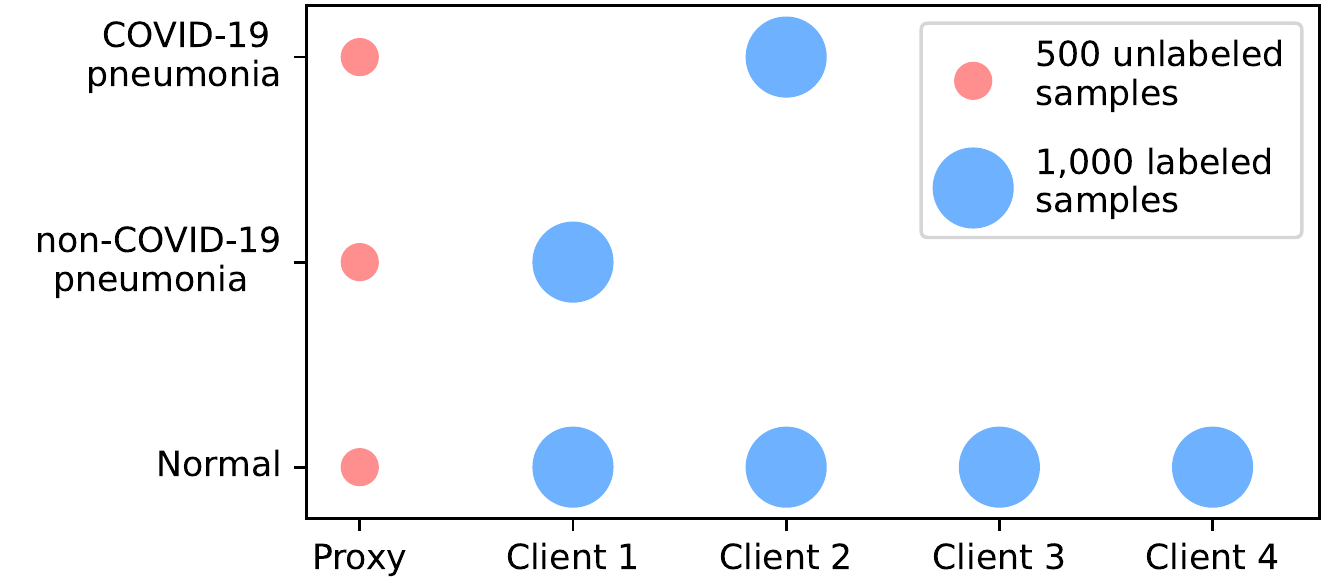} \label{fig:covidx_noniid_vis}
  \caption{\textbf{Visualization of non-IID data distribution.} The horizontal axis indexes the proxy dataset and the local datasets, while the vertical axis indicates class labels. The size of scattered points denotes the number of samples.}
  \label{fig:covidx}
\end{figure}

\begin{figure}[!t]
  \centering
    \includegraphics[width=0.8\textwidth]{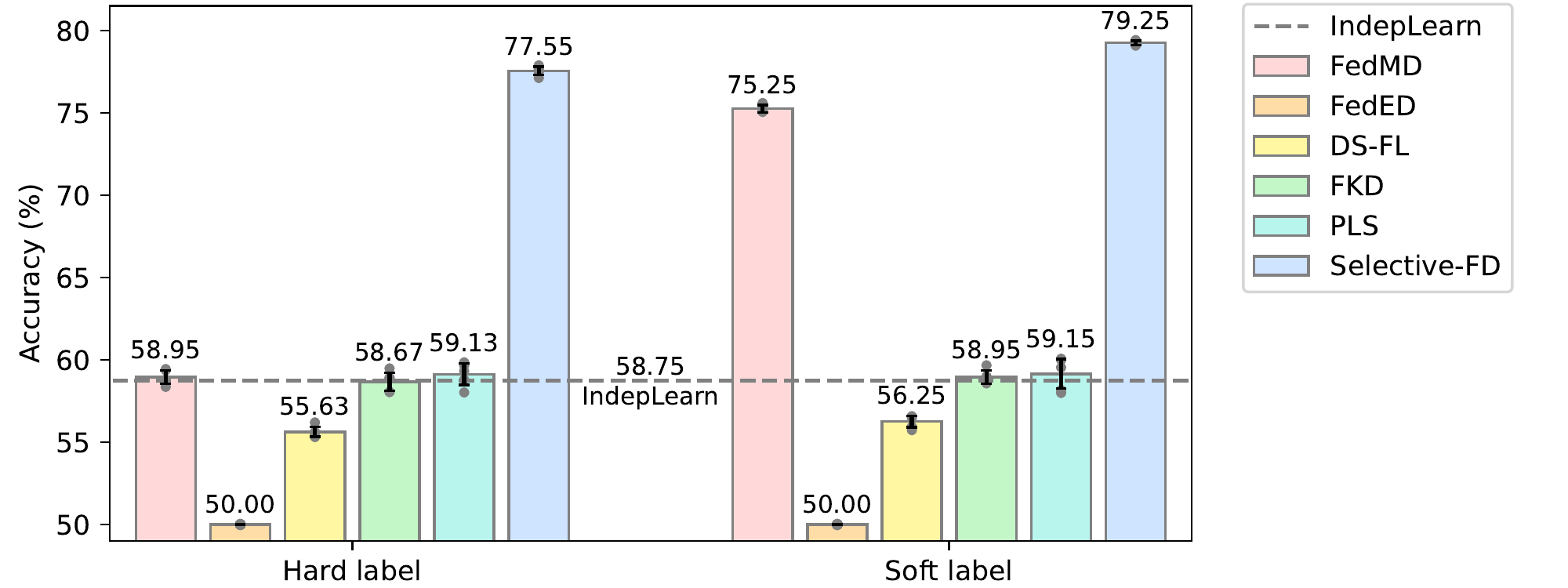}

\caption{\textbf{Test accuracy of different methods on the pneumonia detection task.}
The error bar represents the mean $\pm$ standard deviation of five repetitions.
The results show that the proposed \OurMethod method achieves the best performance, and the accuracy gain is more significant when using hard labels to transfer knowledge.
Specifically, some baselines perform even worse than the independent learning scheme. These results demonstrate that knowledge sharing among clients can mislead and negatively influence local training.
}
\label{fig:covidx_results}
\end{figure}

The average test accuracy of clients on the pneumonia detection task is depicted in Fig.~\ref{fig:covidx_results}.
It is observed that the proposed \OurMethod method outperforms all the baseline methods by a substantial margin, and the performance gain is more significant when using hard labels to transfer knowledge. 
For example, sharing knowledge by hard labels and soft labels resulted in improvements of 19.42\% and 4.00\%, respectively, over the best-performed baseline.
This is because the proposed knowledge selection mechanism can adapt to the heterogeneous characteristics of local data, making it effective in selecting useful knowledge among clients.
In contrast, some baselines perform even worse than the independent learning scheme. 
This finding highlights the potential negative influence of knowledge sharing among clients, which can mislead the local training.
Notably, while hard label sharing provides a stronger privacy guarantee\cite{choquette2021label_only_attack}, soft label sharing provides additional performance gains.
This is because soft labels provide more information about the relationships between classes than hard labels, alleviating errors from misleading knowledge.

We also evaluate the performance of different federated distillation methods on the benchmark image datasets.
As shown in Table~\ref{table:benchmark_dataset_non-IIDandIID_results}, we find that all the methods achieve satisfactory accuracy in the IID setting.
On the contrary, the proposed \OurMethod method consistently outperforms the baselines when local datasets are heterogeneous.
The improvement of our method becomes more significant as the severity of the non-IID problem increases.
Specifically, it is observed that the FKD and PLS methods degrade to IndepLearn in the strong non-IID setting.
This is because each client only possesses one unique class, and the local predictions are always that unique class.
Such misleading knowledge leads to significant performance degradation.

\begin{table}[t!]
\small
\centering
\begin{tabular}{lcccccc}
\hline
\multirow{2}{*}{\begin{tabular}[c]{@{}l@{}}\textbf{Strong}\\\textbf{Non-IID}\end{tabular}} & \multicolumn{2}{c}{MNIST} & \multicolumn{2}{c}{FashionMNIST} & \multicolumn{2}{c}{CIFAR-10} \\ \cline{2-7} 
                        & Hard label  & Soft label  & Hard label      & Soft label     & Hard label    & Soft label   \\ \hline 
IndepLearn                  & \multicolumn{2}{c}{10.00$\pm$0.00} & \multicolumn{2}{c}{10.00$\pm$0.00} & \multicolumn{2}{c}{10.00$\pm$0.00} \\ 
FedMD                   & 18.89$\pm$0.30 & \underline{88.71$\pm$0.28}  & 16.54$\pm$0.25 & \underline{64.63$\pm$0.37} & 10.71$\pm$0.38 & \underline{15.78$\pm$1.39} \\ 
FedED                   & 11.49$\pm$0.25 & 11.92$\pm$0.41 & 12.45$\pm$0.44 & 12.52$\pm$0.38 & \underline{11.83$\pm$0.26} & 12.04$\pm$0.30 \\ 
DS-FL                   & \underline{19.72$\pm$0.32} & 35.25$\pm$0.36 & \underline{17.54$\pm$0.11} & 35.98$\pm$0.43 & 10.87$\pm$0.25 & 12.07$\pm$0.32 \\ 
FKD                   & 10.00$\pm$0.00 & 10.00$\pm$0.00 & 10.00$\pm$0.00 & 10.00$\pm$0.00 & 10.00$\pm$0.00 & 10.00$\pm$0.00 \\ 
PLS                   & 10.00$\pm$0.00 & 10.00$\pm$0.00 & 10.00$\pm$0.00 & 10.00$\pm$0.00 & 10.00$\pm$0.00 & 10.00$\pm$0.00 \\ 
\OurMethod      & \textbf{85.92}$\pm$\textbf{0.37} & \textbf{94.68}$\pm$\textbf{0.52} & \textbf{73.41}$\pm$\textbf{0.98} & \textbf{75.31}$\pm$\textbf{0.29} & \textbf{80.22}$\pm$\textbf{0.74} & \textbf{80.98}$\pm$\textbf{0.39} \\ \hline
\hline
\multirow{2}{*}{\begin{tabular}[c]{@{}l@{}}\textbf{Weak}\\\textbf{Non-IID}\end{tabular}} & \multicolumn{2}{c}{MNIST} & \multicolumn{2}{c}{FashionMNIST} & \multicolumn{2}{c}{CIFAR-10} \\ \cline{2-7} 
                        & Hard label  & Soft label  & Hard label      & Soft label     & Hard label    & Soft label   \\ \hline 
IndepLearn                  & \multicolumn{2}{c}{19.96$\pm$0.00} & \multicolumn{2}{c}{19.82$\pm$0.01} & \multicolumn{2}{c}{19.52$\pm$0.02} \\ 
FedMD                   & 26.77$\pm$0.57 & \underline{95.16$\pm$0.52} & \underline{41.92$\pm$0.30} & \underline{74.83$\pm$0.41} & \underline{62.14$\pm$0.22} & \underline{84.31$\pm$0.53} \\ 
FedED                   & \underline{59.95$\pm$1.11} & 60.26$\pm$1.80 & 32.62$\pm$1.09 & 37.12$\pm$0.85 & 53.11$\pm$0.34 & 56.13$\pm$0.14 \\ 
DS-FL                   & 25.53$\pm$1.43 & 47.87$\pm$0.31 & 23.08$\pm$0.23 & 39.22$\pm$0.26 & 33.22$\pm$0.54 & 52.51$\pm$0.70 \\ 
FKD                   & 19.97$\pm$0.01 & 19.98$\pm$0.00 & 19.54$\pm$0.13 & 19.71$\pm$0.07 & 19.50$\pm$0.02 & 19.51$\pm$0.01 \\ 
PLS                   & 19.96$\pm$0.01 & 19.97$\pm$0.00 & 19.64$\pm$0.09
& 19.70$\pm$0.03 & 19.51$\pm$0.01 & 19.52$\pm$0.01 \\ 
\OurMethod      & \textbf{86.82}$\pm$\textbf{0.26} & \textbf{96.30}$\pm$\textbf{0.25} & \textbf{75.57}$\pm$\textbf{0.61} & \textbf{77.27}$\pm$\textbf{0.31} & \textbf{81.06}$\pm$\textbf{0.67} & \textbf{85.38}$\pm$\textbf{0.35} \\ \hline
\hline 
\multirow{2}{*}{\textbf{IID}} & \multicolumn{2}{c}{MNIST} & \multicolumn{2}{c}{FashionMNIST} & \multicolumn{2}{c}{CIFAR-10} \\ \cline{2-7} 
                        & Hard label  & Soft label  & Hard label      & Soft label     & Hard label    & Soft label   \\ \hline 
IndepLearn                  & \multicolumn{2}{c}{98.18$\pm$0.04} & \multicolumn{2}{c}{86.07$\pm$0.07} & \multicolumn{2}{c}{84.06$\pm$0.03} \\ 
FedMD                   & \textbf{98.59$\pm$0.04} & \textbf{98.63$\pm$0.01} & \underline{86.88$\pm$0.02} & \textbf{87.25$\pm$0.02} & \underline{86.02$\pm$0.09} & \underline{86.31$\pm$0.06} \\ 
FedED                   & 98.20$\pm$0.11 & 98.26$\pm$0.06 & 86.83$\pm$0.14 & 86.88$\pm$0.04 & \textbf{86.54$\pm$0.07} & \textbf{86.87$\pm$0.12} \\ 
DS-FL                   & 98.22$\pm$0.14 & 98.56$\pm$0.02 & 86.15$\pm$0.07 & 86.62$\pm$0.09 & 85.75$\pm$0.08 & 85.82$\pm$0.08 \\ 
FKD                   & 98.40$\pm$0.05 & 98.44$\pm$0.01 & 86.10$\pm$0.15 & 86.14$\pm$0.06 & 84.03$\pm$0.02 & 84.10$\pm$0.08 \\ 
PLS                   & 98.45$\pm$0.02 & 98.48$\pm$0.03 & 86.27$\pm$0.08 & 86.52$\pm$0.05 & 84.60$\pm$0.13 & 84.77$\pm$0.06 \\ 
\OurMethod      & \underline{98.55$\pm$0.01} & \underline{98.60$\pm$0.04} & \textbf{86.92}$\pm$\textbf{0.08} & \underline{87.16$\pm$0.06} & 85.94$\pm$0.07 & 86.06$\pm$0.16 \\ \hline

\end{tabular}
\caption{
Test accuracy of different methods. Each experiment is repeated five times. The results in bold indicate the best performance, while the results underlined represent the second-best performance.
In the non-IID settings, our \OurMethod method performs better than the baseline methods, and the accuracy gain is more significant when using hard labels in knowledge distillation than soft labels.
In the IID scenario, all the methods achieve satisfactory accuracy.
} 
\label{table:benchmark_dataset_non-IIDandIID_results} 
\end{table}

\begin{figure}[t]
  \centering
    \includegraphics[width=1\textwidth]{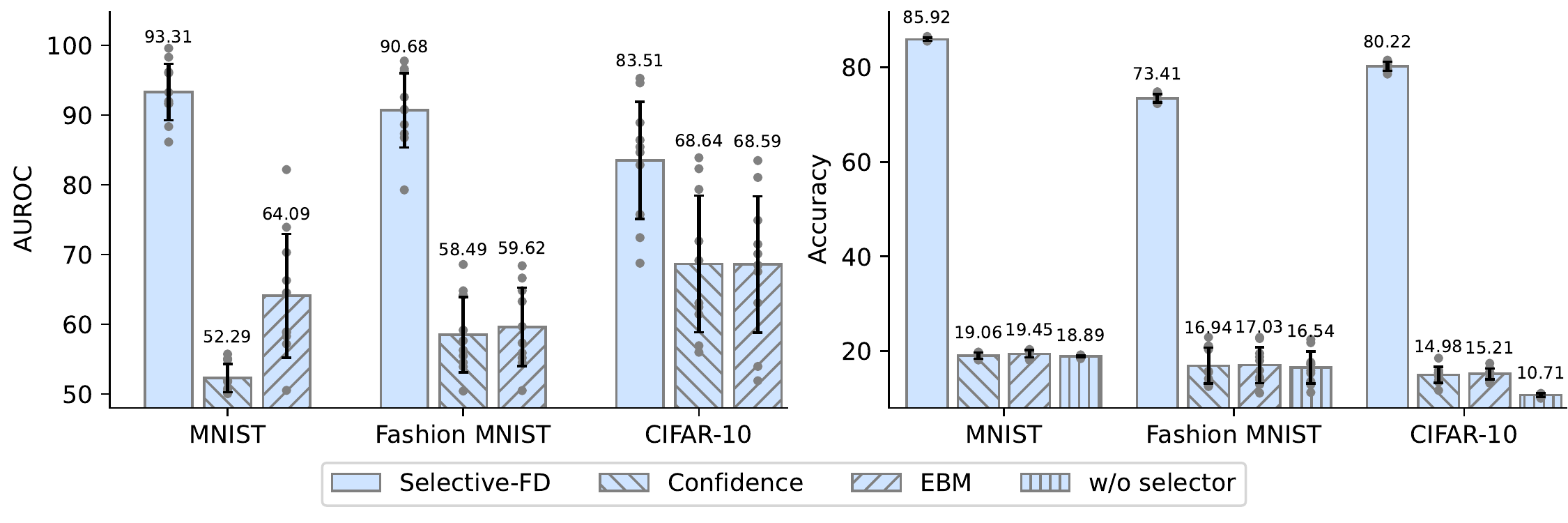}

\caption{
\textbf{The AUROC scores for incorrect prediction detection (left) and the test accuracy after federated distillation (right).}
The error bar represents the mean $\pm$ standard deviation across 10 clients.
The results show that both the AUROC score and the accuracy of our \OurMethod method are much higher than the baselines, indicating its effectiveness in identifying unknown classes from the proxy dataset. 
This results in a remarkable performance gain in federated distillation.
}
\label{fig:client_side_selector}
\end{figure}

\begin{figure}[t]
     \centering
  \subfigure[Ablation study on threshold $\tau_{\text{client}}$ of Selective-FD.]{
    \includegraphics[width=0.95\textwidth]{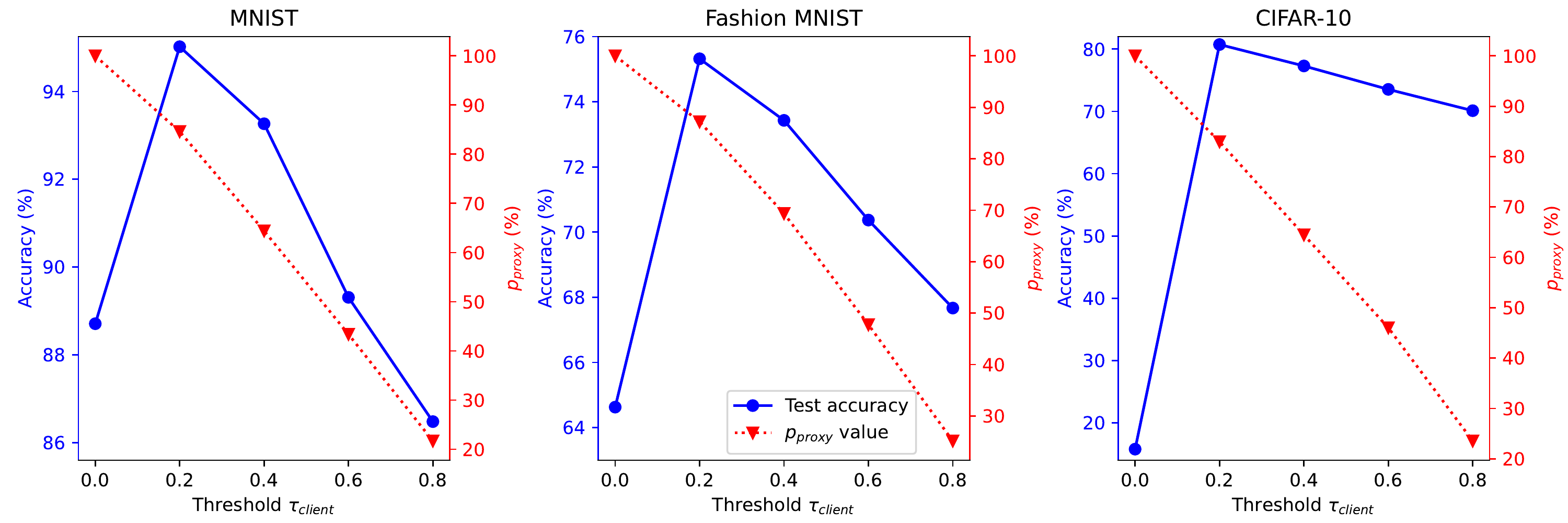}
  \label{fig:}
  }
   \subfigure[Ablation study on threshold $\tau_{\text{server}}$ of Selective-FD.]{
      \includegraphics[width=0.95\textwidth]{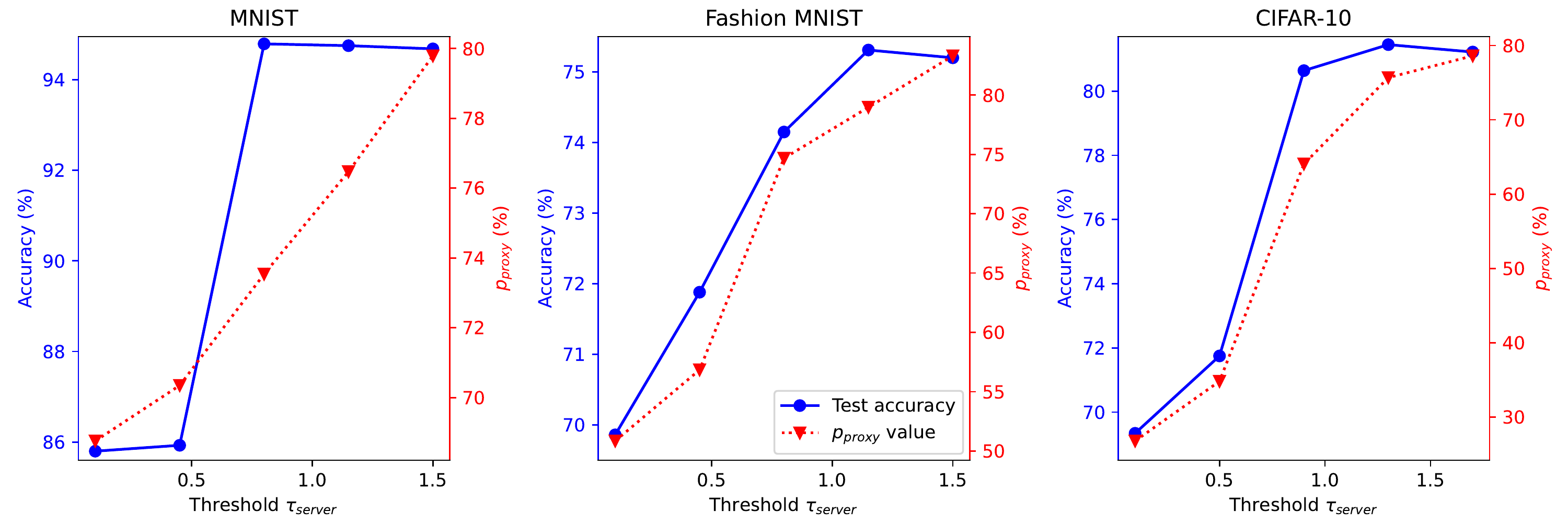}  \label{fig:eps_ablation}
      }
\caption{\textbf{Test accuracy and percentage} $p_{\text{proxy}}$ \textbf{under (a) different values of} $\tau_{\text{client}}$ \textbf{and (b) different values of} $\tau_{\text{server}}$.
We denote the percentage of proxy samples selected for knowledge distillation as $p_{\text{proxy}}$.
When $\tau_{\text{client}}$ is too large or $\tau_{\text{server}}$ is too small, the selectors filter out most of the proxy samples, leading to a small batch size and increased training variance. Conversely, when $\tau_{\text{client}}$ is too small, the local outputs may contain an excessive number of incorrect predictions, leading to a decrease in the effectiveness of knowledge distillation.
Besides, when $\tau_{\text{server}}$ is too large, the ensemble predictions may exhibit high entropy, indicating ambiguous knowledge that could degrade local model training. These empirical results align with the analysis in Remark \ref{remark:threshold}.}
\label{fig:two_thresholds_ablation}
\end{figure}

\subsection*{Effectiveness of Density-Ratio Estimation}

We verify the effectiveness of the density-ratio estimation in detecting incorrect predictions of local models.
Specifically, as an ablation study, we replace the density-ratio based selector in \OurMethod with confidence-based methods\cite{devries2018learning_OOD} and energy-based models (EBMs)~\cite{liu2020energy_OOD}, respectively.
The confidence score refers to the maximum probability of the logits in the classification task, which reflects the reliability of the prediction.
The EBMs distinguish the in-distribution samples and out-distribution samples by learning an underlying probability distribution over the training samples.
The predictions of proxy samples detected as out-distribution samples will be ignored.

Our experiments are conducted on benchmark datasets under the strong non-IID setting, where hard labels are shared among clients for distillation. 
We use the area under the receiver operating characteristic (AUROC) metric to measure the capability of selectors to detect incorrect predictions.
In addition, we evaluate the performance of various selectors by reporting test accuracy.
As shown on the left-hand side of Fig. \ref{fig:client_side_selector}, the AUROC score of our method is much higher than the baselines.
Particularly, the confidence-based method and energy-based model perform only marginally better than random guess (AUROC$=0.5$) on the MNIST and Fashion MNIST datasets.
This is because the neural networks tend to be over-confident\cite{guo2017calibration} about the predictions, and thus the confidence score may not be able to reflect an accurate probability of correctness for any of its predictions.
Besides, the energy-based model fails to detect the incorrect predictions because they often suffer from the problem of overfitting without the out-of-distribution samples\cite{arbelgeneralized}.
The right-hand side of Fig. \ref{fig:client_side_selector} shows the test accuracy after federated distillation.
As the density-ratio estimation can effectively identify unknown classes from the proxy samples, the ensemble knowledge is less misleading and our \OurMethod approach achieves a significant performance gain.

\begin{figure}
\centering
\includegraphics[width=\linewidth]{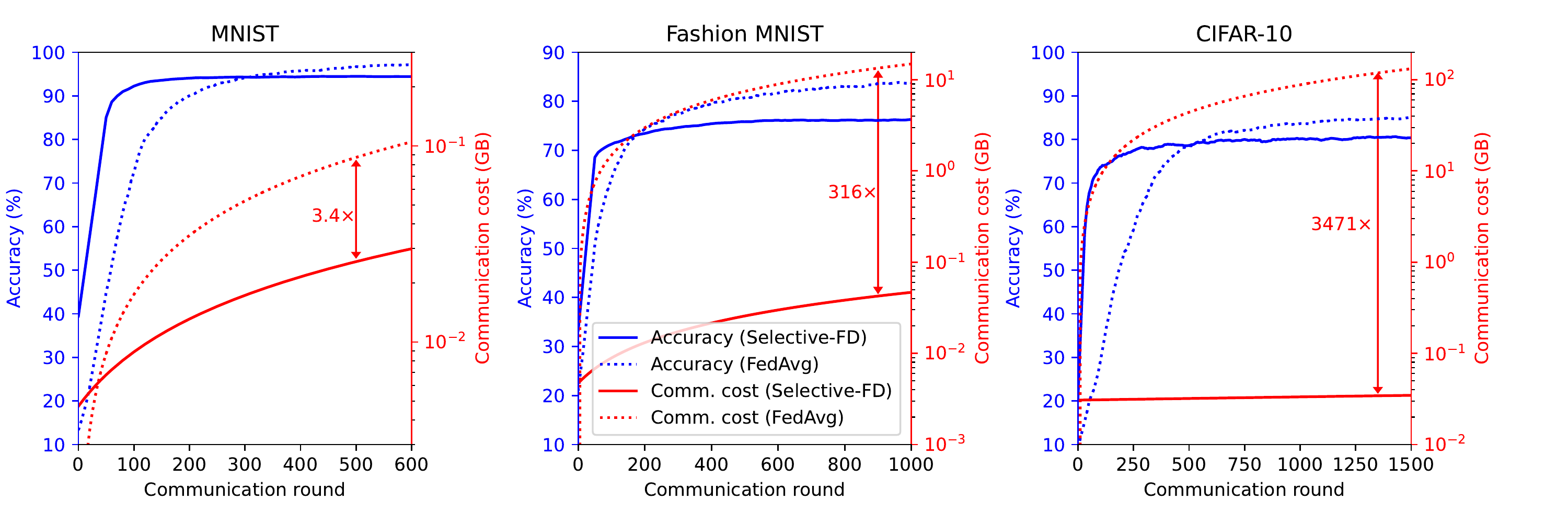}
\captionof{figure}{\textbf{Test accuracy and communication cost as functions of the communication round.} Our \OurMethod method has a comparable accuracy as FedAvg in the MNIST and CIFAR-10 datasets but is inferior in Fashion MNIST.
However, \OurMethod achieves a significant reduction in the communication overhead, which is because the cost of model sharing in FedAvg is much higher than knowledge sharing in our method.}
\label{fig:comm}
\end{figure}

\begin{figure}[t]
\centering
    \centering
    \includegraphics[width=0.7\linewidth]{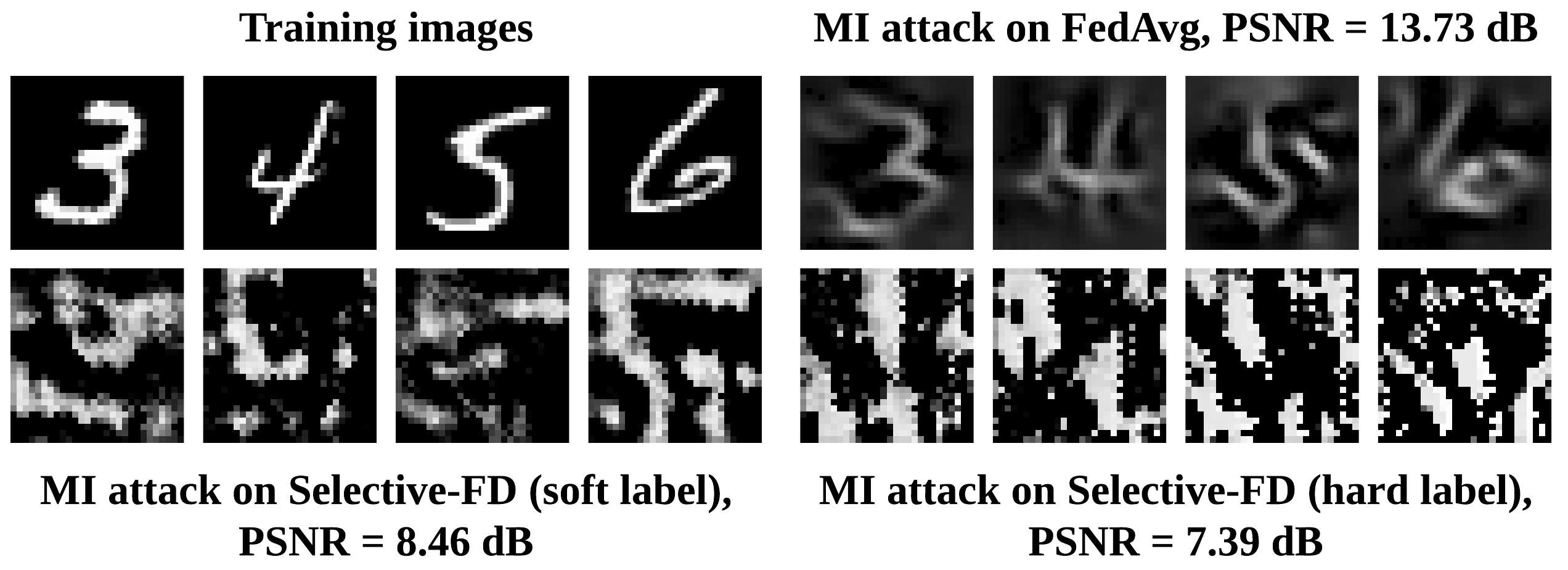}
    \captionof{figure}{\textbf{Visualization of the reconstructed images by performing model inversion (MI) attacks.} 
    We utilize the Peak signal-to-noise ratio (PSNR) as a metric to quantitatively evaluate the quality of the reconstructed images.
    A higher PSNR value indicates an increase in privacy risk.
    }
  \label{fig:psnr}
\end{figure}

\subsection*{Ablation Study on Thresholds of Selectors}

In Selective-FD, the client-side selectors and the server-side selector are designed to remove misleading and ambiguous knowledge, respectively.
Two important parameters are the thresholds $\tau_{\text{client}}$ and $\tau_{\text{server}}$.
Specifically, each client reserves a portion of the local data as a validation set.
The threshold of the client-side selector is defined as the $\tau_{\text{client}}$ quantile of the estimated ratio over this set.
When the density ratio of a sample falls below this threshold, the respective prediction is considered to be misleading.
Besides, the server-side selector filters out the ambiguous knowledge according to the confidence score of the ensemble prediction.
Specifically, when a confidence score is smaller than $1 -\tau_{ \text{server}}/2 $, the corresponding proxy sample will not be used for knowledge distillation.

We investigate the effect of the thresholds $\tau_{\text{client}}$ and $\tau_{\text{server}}$ on the performance.
We conduct experiments on three benchmark image datasets in the strong non-IID setting, where the predictions shared among clients are soft labels. 
Fig. \ref{fig:two_thresholds_ablation} displays the results, with $p_{\text{proxy}}$ representing the percentage of proxy samples used to transfer knowledge during the whole training process.
The threshold $\tau_{\text{client}}$ is set as 0.25 when evaluating the performance of $\tau_{\text{server}}$, and the threshold $\tau_{\text{server}}$ is set as $2$ when evaluating the performance of $\tau_{\text{client}}$.
We have observed that both $\tau_{\text{client}}$ and $\tau_{\text{server}}$ have a considerable impact on the performance.
When $\tau_{\text{client}}$ is set too high or $\tau_{\text{server}}$ is set too low, a significant number of proxy samples are filtered out by the server-side selector, which decreases the test accuracy.
On the other hand, setting $\tau_{\text{client}}$ too low may cause the client-side selectors to be unable to remove the inaccurate local predictions, leading to a negative impact on knowledge distillation.
When $\tau_{\text{server}}$ is set too high, the server-side selector fails to identify the ensemble predictions with high entropy, which results in a drop in accuracy.
These empirical results align with Theorem \ref{thm:upper_bound} and the analysis presented in Remark \ref{remark:threshold}.

\subsection*{Comparison with FedAvg}

Compared with the standard FL setting, such as FedAvg \cite{FedAvg_mcmahan2017communication}, Selective-FD introduces a different approach by sharing knowledge instead of model parameters during the training process. 
This alternative method offers several advantages.
First, Selective-FD naturally adapts to heterogeneous models, eliminating the need for local models to share the same architecture.
Second, Selective-FD largely reduces the communication overhead in comparison to FedAvg, since the size of knowledge is significantly smaller than the model.
Third, Selective-FD provides a stronger privacy guarantee than FedAvg.
The local models, which might contain encoded information from private datasets\cite{qidifferentially}, remain inaccessible to other clients or the server.
To better demonstrate the advantages in communication efficiency and privacy protection offered by the proposed method, the following content provides the quantitative comparisons between Selective-FD and FedAvg.

\subsubsection*{Communication Overhead}

We compare the communication overhead of \OurMethod with FedAvg on the benchmark datasets in the strong non-IID setting. 
In each communication round of FedAvg, the clients train their models locally and upload them to the server for aggregation.
This requires that all local models have the same architecture.
For the MNIST classification task, the local models consist of two convolutional layers and two fully-connected layers.
In the case of Fashion MNIST, we initialize each model as a Multilayer Perceptron (MLP) with two hidden layers, each containing 1,024 neurons. 
Furthermore, we employ ResNet18 \cite{resnet_he2016deep} as the local model to classify CIFAR-10 images.
Our Selective-FD method requires clients to collate proxy samples prior to the training process, which consequently results in an additional communication overhead.
However, Selective-FD significantly reduces the amount of data uploaded and downloaded per communication round compared with FedAvg.
This is because the size of predictions used for knowledge distillation is much smaller than that of model updates utilized for aggregation.
Fig. \ref{fig:comm} plots the test accuracy and communication overhead with respect to the communication round.

It is observed that our \OurMethod method has comparable or inferior accuracy to FedAvg.
But it considerably improves communication efficiency during federated training.
Further improving the performance of Selective-FD is a promising direction for future research.
Additional information regarding the experiments can be found in Supplementary Information.

\subsubsection*{Privacy Leakage}

We compare the privacy leakage of Selective-FD and FedAvg under model inversion attacks \cite{zhang2020secret, zhang2022ideal}.
The objective of the attacker is to reconstruct the private training data based on the shared information from clients.
In FedAvg, a semi-honest server can perform white-box attacks\cite{zhang2020secret} based on the model updates.
In contrast, our Selective-FD method is free from such attacks since the clients' models cannot be accessed by the server.
However, our method remains vulnerable to black-box attacks, where the attacker can infer local samples by querying the clients' models \cite{zhang2022ideal}.
To assess the privacy risk quantitatively, we employ GMI \cite{zhang2020secret} and IDEAL \cite{zhang2022ideal} to attack FedAvg and Selective-FD, respectively.
This experiment is conducted on MNIST, and the results are shown in Fig. \ref{fig:psnr}.
It is observed that the quality of reconstructed images from FedAvg is better than that from Selective-FD.
This demonstrates that sharing model parameters leads to higher privacy leakage than sharing knowledge.
Besides, compared with sharing soft labels in Selective-FD, the reconstructed images inferred from the hard labels have a lower PSNR value.
This indicates that sharing hard labels in Selective-FD exposes less private information than sharing soft labels.
This result is consistent with Hinton's analysis \cite{hinton2015distilling}, where the soft labels provide more information per training case.
In federated training where the local data are privacy-sensitive, such as large genomic datasets \cite{venkatesaramani2023defending},  it becomes crucial to share hard labels rather than soft labels.
This serves as a protective measure against potential membership inference attacks \cite{shokri2017membership}.
Finally, although the knowledge sharing methods provide stronger privacy guarantees compared with FedAvg, the malicious attackers can still infer the label distribution of clients from the shared information.
Developing a privacy-enhancing federated training scheme is a promising but challenging direction for future research.

\section*{Discussion}

This work introduces the Selective-FD method in federated distillation, which includes a selective knowledge sharing mechanism that identifies accurate and precise knowledge from clients for effective knowledge transfer. 
Particularly, it includes client-side selectors and a server-side selector. 
The client-side selectors use density-ratio estimators to identify out-of-distribution samples from the proxy dataset. 
If a sample exhibits a density ratio that approaches zero, it is identified as an outlier.
To prevent the propagation of potentially misleading information to other clients, this sample is not used for knowledge distillation.
Besides, as the local models could be underfitting at the beginning of the training process, the local predictions could be inconsistent among clients.
To prevent the negative influence of ambiguous knowledge, the server-side selector filters out the ensemble predictions with high entropy.

Extensive experiments are conducted on both pneumonia detection and benchmark image classification tasks to investigate the impact of hard labels and soft labels on the performance of knowledge distillation. The results demonstrate that Selective-FD significantly improves test accuracy compared to FD baselines, and the accuracy gain is more prominent in hard label sharing than in soft label sharing. 
In comparison with the standard FL framework that shares model parameters among clients, sharing knowledge in FD may not achieve the same performance level, but this line of work is communication-efficient, privacy-preserving, and heterogeneity-adaptive.
When performing federated training on large language models (LLM) \cite{hilmkil2021scaling, gupta2022recovering}, the FD framework is especially useful since the clients do not need to upload the huge amount of model parameters, and it is difficult for attackers to infer the private texts.
We envision that our proposed method can serve as a template framework for various applications and inspire future research to improve the effectiveness and responsibility of intelligence systems.

However, we must acknowledge that our proposed method is not without its limitations.
Firstly, the federated distillation method relies on proxy samples to transfer knowledge.
If the proxy dataset is biased towards certain classes, the client models may be biased toward these classes. This leads to poor performance and generalization.
Secondly, the complexity of the client-side selector in our Selective-FD method increases quadratically with the number of samples and the sample space, which may limit its practical applicability. 
Some studies in open-set learning have shown that other low-complexity outlier detection methods, while lacking theoretical guarantees, can achieve comparable performance.
This finding motivates us to explore more efficient selectors in future research.
Thirdly, while our proposed method keeps models local, it cannot guarantee perfect privacy.
Attackers may infer information from the private dataset based on the shared knowledge. 
To further strengthen privacy guarantees in FD, we could employ defense methods such as differential privacy mechanisms and secure aggregation protocols. 
However, these defense methods might lead to performance degradation or increased training complexity. 
Future research should develop more efficient and effective defensive techniques to protect clients against attacks while maintaining optimal performance.

\subsection*{Ethical and societal Impact}

It is essential to note that the proposed federated distillation framework, like other AI algorithms, is a dual-use technology. Although the objective is to facilitate the training process of deep learning models by effectively sharing knowledge among clients, this technology can also be misused.
Firstly, federated distillation could be used to train algorithms among malicious entities to identify individuals illegally and monitor the speaking patterns of people without their consent\cite{king2020artificial}. 
This poses threats to individual freedom and civil liberties.
Secondly, it is assumed that all the participants in federated distillation, including the clients and the server, are trusted and behave honestly. However, if a group of clients or the server are malicious, they could manipulate the predictions and add poison knowledge during the training process, heavily degrading the convergence\cite{tolpegin2020data,fang2020local}.
Thirdly, while federated distillation is free from white-box attacks, it is still potentially vulnerable to black-box attacks such as membership inference attacks\cite{choquette2021label_only_attack}.
This is because the local predictions are shared among clients, which may allow an attacker to infer whether a particular sample is part of a client's training set or not. To mitigate this vulnerability, additional privacy-preserving techniques such as differential privacy or secure aggregation can be employed.
Furthermore, there are concerns about the collection of proxy samples to transfer knowledge between clients.
This could potentially lead to breaches of privacy and security, as well as ethical concerns regarding informed consent and data ownership.
In summary, while the proposed federated distillation framework has great potential for facilitating collaborative learning among multiple clients, it is important to be aware of the potential risks and take measures to ensure that the technology is used ethically and responsibly.

\section*{Methods}\label{sec:Model}

In this section, we provide an in-depth introduction to our proposed method.
We first define the problem studied in this paper, then introduce the details of our method, and provide theoretical insights to discuss the impact of the selective knowledge sharing mechanism on generalization.

\subsection*{Notations and Problem Definition}

Consider a federated setting for multi-class classification where each input instance has one ground truth label from $C$ categories. 
We index clients as ${1,2,\ldots, K}$, where the $k$-th client stores a private dataset $\hat{\mathcal{D}}_{k}$, consisting of $m_{k}$ data points sampled from the distribution $\mathcal{D}_{k}$. 
In addition, there is a proxy dataset $\Dproxyhat$ containing $m_{\tProxy}$ samples from the distribution $\Dproxy$ to transfer knowledge among clients.
We denote $\mathcal{X}$ as the input space and $\mathcal{Y} \in V(\Delta^{C-1})$ as the label space.
Specifically, $V(\Delta^{C-1})$ is a vertex set consisting of all vertices in a $C-1$ simplex $\Delta^{C-1}$, where each vertex corresponds to a one-hot vector.
We assume that all the local datasets and the test dataset have the same labeling function $\hat{\bm{h}}^{*}: \mathcal{X} \rightarrow V(\Delta^{C-1})$.
Client $k$ learns a local predictor $\bm{h}_{k}: \mathcal{X} \rightarrow \Delta^{C-1}$ to approach $\hat{\bm{h}}^{*}$ in the training phase and outputs a one-hot vector by $\hat{\bm{h}}_{k}: \mathcal{X} \rightarrow V(\Delta^{C-1})$ in the test phase.
The hypothesis spaces of $\bm{h}_{k}$ and $\hat{\bm{h}}_{k}$ are $\mathcal{H}_{k}$ and $\hat{\mathcal{H}}_{k}$, respectively, which are determined by the parameter spaces of the personalized models.
During the process of knowledge sharing, we define the labeling function of the proxy samples as $\bm{h}_{\tProxy}^{*}: \mathcal{X} \rightarrow \Delta^{C-1}$, which is determined by both the local predictors and the knowledge selection mechanism.
To facilitate the theoretical analysis, we denote $\hat{\bm{h}}_{\tProxy}^{*}: \mathcal{X} \rightarrow V(\Delta^{C-1})$ as the one-hot output of $\bm{h}_{\tProxy}^{*}$.

In the experiments, we use cross entropy as the loss function, while for the sake of mathematical traceability, the $\ell_{1}$ norm is adopted to measure the difference between two predictors, denoted by $\mathcal{L}_{\mathcal{D}}(\hhat,\hhat^{\prime}):= \mathbb{E}_{\mathbf{x} \sim \mathcal{D}}\|\hat{\bm{h}}(\mathbf{x}) - \hat{\bm{h}}^{\prime}(\mathbf{x}) \|_{1}$, where $\mathcal{D}$ is the distribution.
Specifically, $\mathcal{L}_{\mathcal{D}_{\text{test}}}(\hhat_{k},\hhat^{*})$, $\mathcal{L}_{\hat{\mathcal{D}}_{k}}(\hhat_{k},\hhat^{*})$, and $\mathcal{L}_{\hat{\mathcal{D}}_{\text{proxy}}}(\hhat_{k},\bm{h}_{\tProxy}^{*})$ represent the loss over the test distribution, local samples, and the proxy samples, respectively.
Besides, we define the training loss over both the private and proxy samples as $\mathcal{L}_{\hat{\mathcal{D}}_{k} \cup \hat{\mathcal{D}}_{\text{proxy}}}(\hat{\bm{h}}_{k}):= \alpha \mathcal{L}_{\hat{\mathcal{D}}_{k}}(\hat{\bm{h}}_{k},\hhat^{*}) + (1 - \alpha) \mathcal{L}_{\hat{\mathcal{D}}_{\text{proxy}}}(\hhat_{k},\bm{h}_{\tProxy}^{*})$, where $\alpha \in [0,1]$ is a weighted coefficient.
The notation $\operatorname{Pr}_{\mathcal{D}}[\cdot]$ represents the probability of events over the distribution $\mathcal{D}$.

\subsection*{Selective Knowledge Sharing in Federated Distillation}

Federated distillation aims at collaboratively training models among clients by sharing knowledge, instead of sharing models as in FL.
The training process is available in Algorithm \ref{alg:selective-FD}, which involves two phases.
First, the clients train the local models independently on the local data.
Then, the clients share knowledge among themselves based on a proxy dataset and fine-tune the local models over both the local and proxy samples.
Fig. \ref{fig:sksfd_framework} provides an overview of our Selective-FD framework, which includes a selective knowledge sharing mechanism. The following section will delve into the details of this mechanism.

\subsubsection*{Client-Side Selector}

Federated distillation presents a challenge of misleading ensemble predictions caused by the lack of a well-trained teacher. 
Local models may overfit the local datasets, leading to poor generalization on proxy samples outside the local distribution, especially with non-IID data across clients.
To mitigate this issue, our method develops client-side selectors to identify proxy samples that are out of the local distribution (OOD).
This is done through density-ratio estimation \cite{sugiyama2012density_ratio_estimation,fang2021learning_OSL}, which calculates the ratio of two probability densities.
Assuming the input data space $\mathcal{X}$ is compact, we define $\mathcal{U}$ as a uniform distribution with probability density function $u(\mathbf{x})$ over $\mathcal{X}$.
Besides, we denote the probability density function at client $k$ as $p_{k}(\mathbf{x})$.
Our objective is to estimate the density ratio $\bm{w}_{k}^{*}(\mathbf{x}) = p_{k}(\mathbf{x})/u(\mathbf{x})$ based on the observed samples.
Specifically, the in-sampple data $\mathbf{x}$ from the local distribution with $p_{k}(\mathbf{x})>0$ results in $\bm{w}_{k}^{*}(\mathbf{x}) > 0$, while the OOD samples $\mathbf{x}$ with probability $p_{k}(\mathbf{x}) = 0$ leads to $\bm{w}_{k}^{*}(\mathbf{x}) = 0$.
Therefore, the clients can build density-ratio estimators to identify the OOD samples from the proxy dataset.

Considering the property of statistical convergence, we use a kernelized variant of unconstrained least-squares importance fitting (KuLSIF) \cite{kanamori2012statistical_kulsif} to estimate the density ratio.
The estimation model in KuLSIF is a reproducing kernel Hilbert space (RKHS)\cite{berlinet2011reproducing} $\mathcal{W}_{k}$ endowed with a Gaussian kernel function.
We sample $n_{k}$ and $n_{u}$ data points from $\mathcal{D}_{k}$ and $\mathcal{U}$, respectively, and denote the resulting sample sets as $S_{k}$ and $S_{u}$.
Defining the norm on $\mathcal{W}_{k}$ as $\| \cdot  \|_{\mathcal{W}_{k}}$, the density-ratio estimator $\bm{w}_{k}$ is obtained as an optimal solution of 
\begin{equation}
\bm{w}_{k} = 
\text{argmin}_{\bm{w}_{k}^{\prime} \in \mathcal{W}_{k}} \frac{1}{2 n_{u}} \sum_{\mathbf{x} \sim S_{u}}\left(\bm{w}_{k}^{\prime}\left(\mathbf{x}\right)\right)^2-\frac{1}{n_{k}} \sum_{\mathbf{x} \sim S_{k} } \bm{w}_{k}^{\prime}\left(\mathbf{x}\right)+\frac{\beta}{2}\|\bm{w}_{k}^{\prime}\|_{\mathcal{W}_{k}}^2,
\label{equ:density_ratio}
\end{equation}
where the analytic-form solution $\bm{w}_{k}$ is available in Theorem 1 of the KuLSIF method\cite{kanamori2012statistical_kulsif}.
The following theorem reveals the convergence rate of the KuLSIF estimator.
\begin{theorem}
(Convergence rate of KuLSIF \cite{kanamori2012statistical_kulsif}).
Consider RKHS $\mathcal{W}_{k}$ to be the Hilbert space with Gaussian kernel that contains the density ratio $\bm{w}_{k}^{*}$
Given $\delta \in (0,1)$ and setting the regularization $\beta = \beta_{n_{k}, n_{u}}$ such that $\lim _{n_{k}, n_{u} \rightarrow 0} \beta_{n_{k}, n_{u}}=0$ and $\beta_{n_{k}, n_{u}}^{-1}=O\left(\min \{n_{k}, n_{u}\}^{1-\delta}\right)$, we have $\mathbb{E}_{\mathbf{x}\sim u(\mathbf{x})}\|\bm{w}_{k}(\mathbf{x})-\bm{w}_{k}^{*}(\mathbf{x})\|=O_p\left(\beta_{n_{k}, n_{u}}^{1 / 2}\right)$, where $O_p$ is the probability order.
\end{theorem}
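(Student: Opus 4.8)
The plan is to recognize \eqref{equ:density_ratio} as the empirical counterpart of a population risk whose excess value is exactly a squared $L^{2}$ distance, and then to run a standard bias--variance (approximation/estimation) decomposition around a regularized population target. I would define the population objective $J(\bm{w}) := \tfrac{1}{2}\mathbb{E}_{\mathbf{x}\sim u}[\bm{w}(\mathbf{x})^{2}] - \mathbb{E}_{\mathbf{x}\sim p_{k}}[\bm{w}(\mathbf{x})]$ and its regularized version $J_{\beta}(\bm{w}) := J(\bm{w}) + \tfrac{\beta}{2}\|\bm{w}\|_{\mathcal{W}_{k}}^{2}$, so that the objective in \eqref{equ:density_ratio} is precisely the empirical surrogate $\hat{J}_{\beta}$ obtained by replacing the two expectations with averages over $S_{u}$ and $S_{k}$. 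The first key observation is the identity $J(\bm{w}) - J(\bm{w}_{k}^{*}) = \tfrac{1}{2}\mathbb{E}_{\mathbf{x}\sim u}[(\bm{w}(\mathbf{x}) - \bm{w}_{k}^{*}(\mathbf{x}))^{2}]$, obtained by completing the square and using that $\bm{w}_{k}^{*} = p_{k}/u$ is the unconstrained population minimizer. Hence controlling the $L^{2}(u)$ error is equivalent to controlling the excess risk, and the stated $L^{1}(u)$ quantity $\mathbb{E}_{\mathbf{x}\sim u}\|\bm{w}_{k}-\bm{w}_{k}^{*}\|$ is then bounded by the $L^{2}(u)$ error through Cauchy--Schwarz.

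Next I would introduce the regularized population minimizer $\bm{w}_{\beta} := \arg\min_{\bm{w}\in\mathcal{W}_{k}} J_{\beta}(\bm{w})$ and split the error as $\|\bm{w}_{k}-\bm{w}_{k}^{*}\|_{L^{2}(u)} \le \|\bm{w}_{\beta}-\bm{w}_{k}^{*}\|_{L^{2}(u)} + \|\bm{w}_{k}-\bm{w}_{\beta}\|_{L^{2}(u)}$. For the bias term, optimality of $\bm{w}_{\beta}$ gives $J_{\beta}(\bm{w}_{\beta}) \le J_{\beta}(\bm{w}_{k}^{*})$, whence $J(\bm{w}_{\beta}) - J(\bm{w}_{k}^{*}) \le \tfrac{\beta}{2}\|\bm{w}_{k}^{*}\|_{\mathcal{W}_{k}}^{2}$. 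This is exactly where the hypothesis $\bm{w}_{k}^{*}\in\mathcal{W}_{k}$ is essential, since it makes $\|\bm{w}_{k}^{*}\|_{\mathcal{W}_{k}}$ finite and forces the approximation error to vanish at the rate $O(\sqrt{\beta})$ in the $L^{2}(u)$ norm as $\beta\to 0$.

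The estimation term is the crux. Using that $\bm{w}_{k}$ minimizes $\hat{J}_{\beta}$ while $\bm{w}_{\beta}$ minimizes $J_{\beta}$, a basic inequality yields $J_{\beta}(\bm{w}_{k}) - J_{\beta}(\bm{w}_{\beta}) \le (J - \hat{J})(\bm{w}_{k}) - (J - \hat{J})(\bm{w}_{\beta})$, in which the regularizer cancels and only the two empirical-process deviations remain. The hard part will be bounding this data-dependent fluctuation uniformly, since $\bm{w}_{k}$ is random: I would first confine $\bm{w}_{k}$ to an RKHS ball of controlled radius by comparing $\hat{J}_{\beta}(\bm{w}_{k})$ with $\hat{J}_{\beta}(0)$, which bounds $\|\bm{w}_{k}\|_{\mathcal{W}_{k}}$ with high probability. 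On that ball I would exploit the boundedness of the Gaussian kernel (so that $\|\bm{w}\|_{\infty}\le\|\bm{w}\|_{\mathcal{W}_{k}}$) together with entropy or Rademacher-complexity bounds for Gaussian-kernel RKHS balls and a Bernstein-type concentration inequality for the quadratic and linear sample means, producing a uniform deviation of order $\beta^{-1/2}\min\{n_{k},n_{u}\}^{-1/2}$ up to logarithmic factors. Combining the $\beta$-strong convexity of $J_{\beta}$ in $\|\cdot\|_{\mathcal{W}_{k}}$ with this deviation and converting back to $L^{2}(u)$ via the bounded kernel then controls the estimation error.

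Finally I would balance the two contributions. The schedule $\beta_{n_{k},n_{u}}^{-1} = O(\min\{n_{k},n_{u}\}^{1-\delta})$ with $\delta\in(0,1)$ is calibrated precisely so that the estimation error $O_{p}(\beta^{-1/2}\min\{n_{k},n_{u}\}^{-1/2})$ is dominated by the $O(\sqrt{\beta})$ bias, leaving the overall $L^{2}(u)$ rate at $O_{p}(\beta^{1/2})$, with the $O_{p}$ qualifier inherited from the high-probability concentration step. Since the statement is quoted from \cite{kanamori2012statistical_kulsif}, one may instead invoke their theorem verbatim; the sketch above is meant to show why $\bm{w}_{k}^{*}\in\mathcal{W}_{k}$ and the stated growth condition on $\beta$ are the operative assumptions.
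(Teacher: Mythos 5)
The paper does not actually prove this theorem: it is imported verbatim from the KuLSIF paper, and the ``proof'' in the manuscript is a one-line pointer to Theorem 2 of Kanamori et al.\ \cite{kanamori2012statistical_kulsif}. So your closing remark --- that one may simply invoke their theorem --- is exactly what the paper does, and in that sense your fallback is fine. Your sketch also has the right architecture, and it matches the structure of the original KuLSIF analysis: the excess-risk identity $J(\bm{w})-J(\bm{w}_{k}^{*})=\tfrac{1}{2}\mathbb{E}_{u}[(\bm{w}-\bm{w}_{k}^{*})^{2}]$ is correct, the bias bound $\|\bm{w}_{\beta}-\bm{w}_{k}^{*}\|_{L^{2}(u)}\leq\sqrt{\beta}\,\|\bm{w}_{k}^{*}\|_{\mathcal{W}_{k}}$ correctly isolates where $\bm{w}_{k}^{*}\in\mathcal{W}_{k}$ is used, and passing from $L^{2}(u)$ to the stated $L^{1}(u)$ quantity via Jensen/Cauchy--Schwarz is legitimate.

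There is, however, a concrete quantitative gap in your balancing step. You claim a uniform deviation of order $\beta^{-1/2}\min\{n_{k},n_{u}\}^{-1/2}$ and assert that the schedule $\beta_{n_{k},n_{u}}^{-1}=O(\min\{n_{k},n_{u}\}^{1-\delta})$ is ``calibrated precisely'' so that this is dominated by the bias $O(\sqrt{\beta})$. Writing $n=\min\{n_{k},n_{u}\}$ and taking the extreme admissible schedule $\beta\asymp n^{-(1-\delta)}$, your estimation term is $\beta^{-1/2}n^{-1/2}=n^{-\delta/2}$ while the bias is $\beta^{1/2}=n^{-(1-\delta)/2}$; for any $\delta<1/2$ the estimation term strictly dominates, so your bound does not yield the claimed $O_{p}(\beta^{1/2})$ rate over the full range $\delta\in(0,1)$ (it would only cover $\delta\geq 1/2$). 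What rescues the weak growth condition in the actual theorem is the near-parametric capacity of Gaussian-kernel RKHS balls --- bracketing entropy $\log N_{[\,]}(\varepsilon)=O(\varepsilon^{-\gamma})$ for \emph{every} $\gamma>0$ --- combined with a localized (peeling-type) empirical-process argument, which produces an estimation error with an arbitrarily small polynomial penalty in $\beta^{-1}$ rather than the fixed $\beta^{-1/2}$ factor your Bernstein-plus-ball argument gives. Without that refinement, your sketch proves a weaker statement than the one quoted; with it, you are essentially reproducing the proof of Theorem 2 in \cite{kanamori2012statistical_kulsif}. (One further nit: the condition $\lim_{n_{k},n_{u}\rightarrow 0}$ in the statement is a typo in the paper for $n_{k},n_{u}\rightarrow\infty$; your reading of it as an asymptotic schedule is the correct one.)
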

The proof is available in Theorem 2 of the KuLSIF method \cite{kanamori2012statistical_kulsif}. 
This theorem demonstrates that as the number of samples increases and the regularization parameter $\beta_{n_{k}, n_{u}}$ approaches zero, the estimator $\bm{w}_{k}$ converges to the density ratio $\bm{w}_{k}^{*}$.
During federated distillation, we use a threshold $\tau_{\text{client}}>0$ to distinguish between in-distribution and out-of-distribution samples. 
If the estimated $\bm{w}_{k}(\mathbf{x})$ value of a proxy sample $\mathbf{x}$ is below $\tau_{\text{client}}$, it is considered as an out-of-distribution sample at client $k$. In such cases, the client-side selector does not upload the corresponding local prediction as it could be misleading.

\subsubsection*{Server-Side Selector}

After receiving local predictions from clients, the server averages them to produce the ensemble predictions.
For each proxy sample $\mathbf{x}$, the ensemble prediction is denoted as $\hProxyStar(\mathbf{x}) \in \Delta^{C-1}$, and the corresponding one-hot prediction is represented as $\hProxyHatStar(\mathbf{x}) \in V(\Delta^{C-1})$.
It is important to note that if the local predictions for a specific proxy sample differ greatly among clients, the resulting ensemble prediction $\hProxyStar(\mathbf{x})$ could be ambiguous with high entropy. 
This ambiguity could negatively impact knowledge distillation. 
To address this issue, we developed a server-side selector that measures sample ambiguity by calculating the $\ell_{1}$ distance between $\hProxyStar(\mathbf{x})$ and $\hProxyHatStar(\mathbf{x})$. The closer this distance is to zero, the less ambiguous the prediction is.
In the proposed Selective-FD framework, the server-side selector applies a threshold $\tau_{\text{server}} > 0$ to filter out ambiguous knowledge, where the ensemble predictions with an $\ell_{1}$ distance greater than $\tau_{\text{server}}$ will not be sent back to the clients for distillation.

\begin{algorithm}[!t]
\begin{algorithmic}[1]
\STATE Setting the training round $T$ and the client number $K$. Server and clients collect the proxy dataset $\mathcal{D}_{\text{proxy}}$.
\STATE Clients construct client-side selectors by minimizing (\ref{equ:density_ratio}) and initialize local models.
\FOR {$t$ in $1,\ldots,T$}
    \STATE \textbf{GenerateEnsemblePredictions}()
    \FOR {Client $k$ in $1,\ldots,K$ (in parallel)}
    \STATE Train the local model based on the private dataset $\mathcal{D}_{k}$.
    \STATE Utilize proxy samples and ensemble predictions for knowledge distillation.
    \ENDFOR
\ENDFOR\\[1.0ex]

\textbf{GenerateEnsemblePredictions}():\\
\STATE Server randomly selects the indexes of proxy samples and sends them to the clients.
\FOR {Client $k$ in $1,\ldots,K$ (in parallel)}
    \STATE Client $k$ computes the predictions on proxy samples, filters out misleading knowledge based on the client-side selector, and uploads the local predictions to the server.
\ENDFOR
\STATE Server aggregates the local predictions, removes ambiguous knowledge based on the server-side selector, and sends the ensemble predictions back to the clients.
\end{algorithmic}
\caption{Selective-FD}
\label{alg:selective-FD}
\end{algorithm}

\subsection*{Theoretical Insights}

In this section, we establish an upper bound for the loss of federated distillation, while also discussing the effectiveness of the proposed selective knowledge sharing mechanism in the context of domain adaptation\cite{ben2010theory_domain_adaptation}. 
To ensure clarity, we begin by providing relevant definitions before delving into the analysis.

\begin{definition}
(Minimum combined loss) The ideal predictor in the hypothesis space $\hat{\mathcal{H}}_{k}$ achieves the minimum combined loss $\lambda$ over the test and training sets. 
Two representative $\lambda_{k}$, {\normalfont$\lambda_{k,\text{proxy}}$} are defined as follows:
\begin{equation}
    \lambda_{k} = \min_{\hat{\bm{h}}_{k}\in\hat{\mathcal{H}}_{k}}\left\{\mathcal{L}_{\mathcal{D}_{\normalfont\text{test}}}(\hhat_{k},\hhat^{*}) + \mathcal{L}_{\mathcal{D}_{k}}(\hhat_{k},\hhat^{*})\right\}, \quad  \lambda_{k,\normalfont\text{proxy}} = \min_{\hhat_{k} \in \hat{\mathcal{H}}_{k}}\left\{\mathcal{L}_{\mathcal{D}_{\normalfont\text{test}}}(\hhat_{k},\hhat^{*}) + \mathcal{L}_{\mathcal{D}_{\normalfont\text{proxy}}}(\hhat_{k},\hhat^{*})\right\}.
\end{equation}
\end{definition}

The ideal predictor serves as an indicator of the learning ability of the local model.
If the ideal predictor performs poorly, it is unlikely that the locally optimized model, which minimizes the training loss, will generalize well on the test set.
On the other hand, when the labeling function $\hhat^{*}$ belongs to the hypothesis space $\hat{\mathcal{H}}_{k}$, we get the minimum loss as $\lambda_{k} = \lambda_{k,\tProxy}=0 $.
The next two definitions aim to introduce a metric for measuring the distance between distributions.
\begin{definition}
(Hypothesis space $\mathcal{G}_{k}$)
For a hypothesis space $\hat{\mathcal{H}}_{k}$, we define a set of hypotheses $g_{k}: \mathcal{X} \rightarrow \{0,1\}$ as $\mathcal{G}_{k}$, where $g_{k}(\mathbf{x}) = \frac{1}{2}\|\hhat_{k}(\mathbf{x}) - \hhat_{k}^{\prime}(\mathbf{x})\|_{1}$ for $\hhat_{k},\hhat_{k}^{\prime} \in \hat{\mathcal{H}}_{k}$.
\end{definition}
\begin{definition}
($\mathcal{G}_{k}$-distance\cite{kifer2004detecting}) Given two distributions $\mathcal{D}$ and $\mathcal{D}^{\prime}$ over $\mathcal{X}$, let $\mathcal{G}_{k}= \{ g_{k}: \mathcal{X} \rightarrow \{0,1\} \}$ be a hypothesis space, and the $\mathcal{G}_{k}$-distance between $\mathcal{D}$ and $\mathcal{D}^{\prime}$ is $d_{\mathcal{G}_{k}}\left(\mathcal{D}, \mathcal{D}^{\prime}\right)=2 \sup _{g_{k} \in \mathcal{G}_{k}}\left|\operatorname{Pr}_{\mathcal{D}}[g_{k}(\mathbf{x}) = 1]-\operatorname{Pr}_{\mathcal{D}^{\prime}}[g_{k}(\mathbf{x}) = 1]\right|$.
\end{definition}
\noindent With the above preparations, we derive an upper bound of the test loss of the predictor $\hhat_{k}$ at client $k$ following the process of federated distillation.
\begin{theorem}\label{thm:upper_bound}
With probability at least $1-\delta$, $\delta \in (0,1)$, we have
\begin{align}
\mathcal{L}_{\mathcal{D}_{\normalfont\text{test}}} 
(\hhat_{k},\hhat^{*})  \leq &  \underbrace{\normalfont\mathcal{L}_{\Dkhat \cup \Dproxyhat }(\hhat_{k})}_{\text{Empirical risk}} + \underbrace{\sqrt{\left( \frac{2\alpha^{2}}{m_{k}} + \frac{2(1-\alpha)^{2}}{m_{\normalfont\text{proxy}}} \right) \log\frac{2}{\delta} }}_{\text{Numerical constraint}} +\alpha  \left[\normalfont\lambda_{k} + d_{\mathcal{G}_{k}}(\Dk,\Dtest) \right]  \label{equ:test_loss_upper_bound1} \\
&  + (1 - \alpha) \biggr[ \lambda_{k,\normalfont\text{proxy}} + d_{\mathcal{G}_{k}}(\normalfont\Dproxy,\Dtest)  +  \underbrace{p_{\tProxy}^{(1)}\mathcal{L}_{\mathcal{D}_{\tProxy}^{(1)}}(\hhat^{*},\bm{h}_{\normalfont\text{proxy}}^{*})}_{\text{Misleading knowledge}} + \underbrace{p_{\tProxy}^{(2)}\mathcal{L}_{\mathcal{D}_{\tProxy}^{(2)}}(\hhat_{\text{proxy}}^{*}, \bm{h}_{\normalfont\text{proxy}}^{*})}_{\text{Ambiguous knowledge}}  \biggr], \label{equ:test_loss_upper_bound2}
\end{align}
where the probabilities $p_{\normalfont\tProxy}^{(1)} = \mathrm{Pr}_{\normalfont\Dproxy}[\normalfont\hhat^{*}(\mathbf{x}) \neq \hProxyHatStar(\mathbf{x})]$ and $p_{\normalfont\tProxy}^{(2)} = \mathrm{Pr}_{\normalfont\Dproxy}[\normalfont\hhat^{*}(\mathbf{x}) = \hProxyHatStar(\mathbf{x})]$ satisfy $\normalfont p_{\tProxy}^{(1)}+ p_{\tProxy}^{(2)} = 1$.
$\normalfont{\mathcal{D}_{\tProxy}^{(1)}}$ and $\normalfont{\mathcal{D}_{\tProxy}^{(2)}}$ represent the distributions of proxy samples satisfying $\normalfont\hhat^{*}(\mathbf{x}) \neq \hProxyHatStar(\mathbf{x})$ and $\normalfont\hhat^{*}(\mathbf{x}) = \hProxyHatStar(\mathbf{x})$, respectively.
\end{theorem}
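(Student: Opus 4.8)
The plan is to read this as a two-source domain-adaptation bound in the style of Ben-David et al., with the local distribution $\Dk$ and the proxy distribution $\Dproxy$ acting as the two source domains and $\Dtest$ as the target. First I would split the target loss as $\mathcal{L}_{\Dtest}(\hhat_{k},\hhat^{*}) = \alpha\,\mathcal{L}_{\Dtest}(\hhat_{k},\hhat^{*}) + (1-\alpha)\,\mathcal{L}_{\Dtest}(\hhat_{k},\hhat^{*})$ and transfer the two copies to the two source distributions separately, so that the final weights $\alpha$ and $1-\alpha$ appear exactly as in the statement.

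For each copy I would invoke one domain-adaptation inequality. Writing $\hhat_{k}^{\diamond}$ for the minimizer defining $\lambda_{k}$, the triangle inequality for the $\ell_{1}$-based loss gives $\mathcal{L}_{\Dtest}(\hhat_{k},\hhat^{*}) \le \mathcal{L}_{\Dtest}(\hhat_{k},\hhat_{k}^{\diamond}) + \mathcal{L}_{\Dtest}(\hhat_{k}^{\diamond},\hhat^{*})$. Since $g_{k}(\mathbf{x}) = \tfrac{1}{2}\|\hhat_{k}(\mathbf{x}) - \hhat_{k}^{\diamond}(\mathbf{x})\|_{1} \in \mathcal{G}_{k}$ is $\{0,1\}$-valued, each such loss equals $2\operatorname{Pr}[g_{k}=1]$, so the definition of $d_{\mathcal{G}_{k}}$ directly yields $\mathcal{L}_{\Dtest}(\hhat_{k},\hhat_{k}^{\diamond}) \le \mathcal{L}_{\Dk}(\hhat_{k},\hhat_{k}^{\diamond}) + d_{\mathcal{G}_{k}}(\Dk,\Dtest)$. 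A further triangle inequality on the $\Dk$ term, combined with the definition of $\lambda_{k}$, collapses the two ideal-hypothesis losses $\mathcal{L}_{\Dk}(\hhat^{*},\hhat_{k}^{\diamond}) + \mathcal{L}_{\Dtest}(\hhat_{k}^{\diamond},\hhat^{*})$ into $\lambda_{k}$, giving $\mathcal{L}_{\Dtest}(\hhat_{k},\hhat^{*}) \le \mathcal{L}_{\Dk}(\hhat_{k},\hhat^{*}) + \lambda_{k} + d_{\mathcal{G}_{k}}(\Dk,\Dtest)$. The identical argument with $\lambda_{k,\tProxy}$ produces the analogous bound through $\Dproxy$.

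The step I expect to carry the real content is reconciling the true labeling function $\hhat^{*}$ with the distilled labeling function $\hProxyStar$ on the proxy set. After the domain-adaptation step the proxy copy is controlled by $\mathcal{L}_{\Dproxy}(\hhat_{k},\hhat^{*})$, which I would relax by the triangle inequality to $\mathcal{L}_{\Dproxy}(\hhat_{k},\hProxyStar) + \mathcal{L}_{\Dproxy}(\hProxyStar,\hhat^{*})$; the first term is precisely the population distillation loss minimized during training. I would then split $\mathcal{L}_{\Dproxy}(\hProxyStar,\hhat^{*})$ over the event $\{\hhat^{*}(\mathbf{x}) \neq \hProxyHatStar(\mathbf{x})\}$ and its complement, yielding $p_{\tProxy}^{(1)}\mathcal{L}_{\mathcal{D}_{\tProxy}^{(1)}}(\hhat^{*},\hProxyStar) + p_{\tProxy}^{(2)}\mathcal{L}_{\mathcal{D}_{\tProxy}^{(2)}}(\hProxyStar,\hhat^{*})$. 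The key observation is that on the agreement region $\hhat^{*} = \hProxyHatStar$, so the second expectation can be rewritten against the one-hot ensemble $\hProxyHatStar$, turning it into the ambiguity term $\mathcal{L}_{\mathcal{D}_{\tProxy}^{(2)}}(\hProxyHatStar,\hProxyStar)$, while the first expectation is exactly the misleading-knowledge term. This is the conceptual heart of the proof and the point at which the client-side selector (shrinking $p_{\tProxy}^{(1)}$) and the server-side selector (shrinking the ambiguity loss) enter the bound.

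Finally I would convert the population training loss $\alpha\,\mathcal{L}_{\Dk}(\hhat_{k},\hhat^{*}) + (1-\alpha)\,\mathcal{L}_{\Dproxy}(\hhat_{k},\hProxyStar)$ into the empirical risk $\mathcal{L}_{\Dkhat \cup \Dproxyhat}(\hhat_{k})$. Each summand is an $\ell_{1}$ distance between a probability vector and either a one-hot or a soft label, hence bounded in $[0,2]$, so the $m_{k}$ local terms carry bounded-difference constant $2\alpha/m_{k}$ and the $m_{\tProxy}$ proxy terms carry $2(1-\alpha)/m_{\tProxy}$. A two-sided McDiarmid (bounded-differences) inequality then produces the numerical-constraint term $\sqrt{(2\alpha^{2}/m_{k} + 2(1-\alpha)^{2}/m_{\tProxy})\log(2/\delta)}$ once the failure probability is set to $\delta$, which supplies the ``with probability at least $1-\delta$'' guarantee. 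Assembling the $\alpha$- and $(1-\alpha)$-weighted pieces gives the stated bound. The caveat I would flag is that this concentration step treats $\hhat_{k}$ as fixed; because $\hhat_{k}$ is data-dependent, a fully rigorous version would either condition suitably or pay a uniform-convergence complexity term, but the stated numerical form corresponds exactly to the fixed-predictor McDiarmid bound.
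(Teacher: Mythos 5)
Your proposal is correct and follows essentially the same route as the paper's proof: the same $\alpha/(1-\alpha)$ split of the target loss, the same ideal-predictor triangle inequalities combined with the $\mathcal{G}_{k}$-distance lemma, the same triangle-inequality switch between $\hhat^{*}$ and $\hProxyStar$ on the proxy domain followed by splitting $\mathcal{L}_{\Dproxy}(\bm{h}_{\tProxy}^{*},\hhat^{*})$ over the agreement event (using $\hhat^{*}=\hProxyHatStar$ there to obtain the ambiguity term), and a concentration step in which your bounded-differences argument coincides with the paper's weighted Hoeffding bound for independent sums. The fixed-predictor caveat you flag applies equally to the paper's own concentration lemma, so it marks no substantive divergence.
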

In (\ref{equ:test_loss_upper_bound1}), the first term on the right-hand side represents the empirical risk over the local and proxy samples, and the second term is a numerical constraint, which indicates that having more proxy samples, whose number is denoted as $m_{\tProxy}$, is beneficial to the generalization performance.
The last two terms in (\ref{equ:test_loss_upper_bound2}) account for the misleading and ambiguous knowledge in distillation.
From Theorem \ref{thm:upper_bound}, two key implications can be drawn. 
Firstly, when there is severe data heterogeneity, the resulting high distribution divergence $d_{\mathcal{G}_{k}}(\Dk,\Dtest), d_{\mathcal{G}_{k}}(\Dproxy,\Dtest)$ undermines generalization performance.
When the proxy distribution is closer to the test set than the local data, i.e., $d_{\mathcal{G}_{k}}(\Dk,\Dtest) \geq d_{\mathcal{G}_{k}}(\Dproxy,\Dtest)$, federated distillation can improve performance compared to independent training. 
Secondly, if the labeling function (i.e., the ensemble prediction) $\hProxyStar$ of the proxy samples is highly different from the labeling function $\hhat^{*}$ of test samples, the error introduced by the misleading and ambiguous knowledge can be significant, leading to negative knowledge transfer.
Our proposed selective knowledge sharing mechanism aims to make the ensemble predictions of unlabeled proxy samples closer to the ground truths.
Particularly, a large threshold $\tau_{\text{client}}$ can mitigate the effect of incorrect predictions, while a small threshold $\tau_{\text{server}}$ implies less ambiguous knowledge being used for distillation.

\begin{remark}
\label{remark:threshold}

Care must be taken when setting the thresholds $\tau_{\normalfont\text{client}}$ and $\tau_{\normalfont\text{server}}$, as a $\tau_{\normalfont\text{client}}$ that is too large or a $\tau_{\text{server}}$ that is too small could filter out too many proxy samples and result in a small $m_{\normalfont\tProxy}$. This would enlarge the second term on the right-hand side of (\ref{equ:test_loss_upper_bound1}).
Additionally, the threshold $\tau_{\normalfont\text{server}}$ effectively balances the losses caused by the misleading and ambiguous knowledge, as indicated by the inequalities $2 -\tau_{\normalfont\text{server}} \leq \|\normalfont\hhat^{*}(\mathbf{x}) - {\bm{h}}_{\normalfont\text{proxy}}^{*}(\mathbf{x}) \|_{1}$ and $\| \normalfont \hhat_{\tProxy}^{*}(\mathbf{x}) - {\bm{h}}_{\normalfont\text{proxy}}^{*}(\mathbf{x})\|_{1} \leq \tau_{\normalfont\text{server}}$.
This property aligns with the empirical results presented in Fig. \ref{fig:two_thresholds_ablation}.
\end{remark}

\begin{remark}
The proposed mechanism for selectively sharing knowledge and its associated thresholds $\tau_{\normalfont\text{client}}, \tau_{\normalfont\text{server}}$ might alter the distributions 
${\mathcal{D}}_{\normalfont\tProxy}, \hat{\mathcal{D}}_{\normalfont\tProxy}$, thus influencing the empirical risk, the minimum combined loss, the $\mathcal{G}_{k}$-distance, and the probabilities $p_{\normalfont\tProxy}^{(1)},p_{\normalfont\tProxy}^{(2)}$ in Theorem \ref{thm:upper_bound}.
A more comprehensive and rigorous analysis of these effects is left to our future work.

\end{remark}

\section*{Data Availability}

The datasets used in this paper are publicly available.
The chest X-ray images are from the COVIDx dataset, which is available at \url{https://www.kaggle.com/datasets/andyczhao/covidx-cxr2}.
Specifically, this dataset consists of NIH Chest X-ray Dataset \cite{wang2017chestx} at \url{https://nihcc.app.box.com/v/ChestXray-NIHCC} and RSNA-STR Pneumonia Detection Challenge image datasets \cite{shih2019augmenting} at \url{https://www.rsna.org/education/ai-resources-and-training/ai-image-challenge/RSNA-Pneumonia-Detection-Challenge-2018}. 
The benchmark datasets MNIST, Fashion MNIST, and CIFAR-10 are available at \url{http://yann.lecun.com/exdb/mnist/}, \url{https://github.com/zalandoresearch/fashion-mnist}, and \url{https://www.cs.toronto.edu/~kriz/cifar.html}, respectively.
The usage of these datasets in this work is permitted under their licenses.

\section*{Code Availability}

Codes\cite{my_github} for this work are available at \url{https://github.com/shaojiawei07/Selective-FD}.
All experiments and implementation details are thoroughly described in the Experiments section, Methods section, and Supplementary Information.

\bibliography{sample}

\section*{Acknowledgements}

This work was supported by the Hong Kong Research Grants Council under the Areas of Excellence scheme grant AoE/E-601/22-R (J.Z.) and NSFC/RGC Collaborative Research Scheme grant CRS\_HKUST603/22 (J.Z.).
Some icons in Fig. \ref{fig:sksfd_framework} are from the website \url{https://icons8.com/}.
The chest X-ray image in Fig. \ref{fig:knowledge_and_selector} is from Chanut-is-Industries, Flaticon.com.

\section*{Author Contributions}

J.Z. coordinated and supervised the research project.
J.S. conceived the idea of this work, implemented the models for experiments, and analyzed the results. 
All the authors contributed to the writing of this manuscript.

\section*{Competing Interests}

The authors declare no competing interests.

\clearpage


\section*{Supplementary Information}

\subsection*{More Details of Experiments}

In the experiments, we use stochastic gradient descent (SGD) as the optimizer. 
The learning rate is set to 0.1.
The clients first train the local models independently based on their datasets for 200 SGD steps.
In each communication round, the clients train the local models by using both local samples and proxy samples.
In the non-IID settings, the clients perform 1-step and 10-step SGD training on the local datasets and the proxy dataset, respectively.
In the IID setting, the clients perform 5-step and 6-step SGD training on the local datasets and the proxy dataset, respectively.
To ensure a fair comparison, the clients in FedAvg perform 11-step SGD training per communication round.
The architectures of local models are shown in Table \ref{table:network_structure_xray}, Table \ref{table:network_structure_mnist_fashion_mnist}, and Table \ref{table:network_structure_cifar}. 
The convolutional layer with output channel $o$, kernel size $k$, and padding $p$ is denoted as Conv$(o,k,p)$.
The fully-connected layer with output dimension $o$ is defined as linear($o$), and the max-pooling layer with kernel size $k$ is denoted as MaxPool($k$). 
ReLU represents the rectified linear unit function.
On the CIFAR-10 dataset, our method estimates the density ratio of the extracted features to build the client-side selector.
These features are the input of the fully-connected layer in ResNet.
The ResNet models are pretrained on the ImageNet dataset.

We select four publicly available datasets to evaluate the performance of the proposed method. 
Table \ref{table:summary_dataset} provides a summary of these datasets.
In Selective-FD, each client adopts a portion of the local data as a validation set to determine the threshold $\tau_{\text{client}}$ of the client-side selector.
In the weak non-IID scenario, the local datasets contain two classes. 
As directly modeling the joint distribution of these classes results in high sampling complexity, we construct two density-ratio estimators for these two classes, respectively.
If any of the estimators indicates a proxy sample as an in-distribution sample, the corresponding local prediction will be used in knowledge distillation.

\begin{table}[h]
\small
\begin{center}
\begin{tabular}{c|c|c|c}
\hline
Client 1                & Client 2                 & Client 3                  & Client 4                  \\ \hline
Conv(10, 5, 0) + ReLU & Conv (10, 3, 1) + ReLU & Linear(1024) + ReLU & Linear(1024) + ReLU \\
MaxPool(2)              & MaxPool(2)               & Linear(512) + ReLU  & Linear(1024) + ReLU \\
Conv(20, 5, 0) + ReLU & Conv (20, 3, 1) + ReLU & Linear(256) + ReLU   & Linear(3)           \\
MaxPool(2)              & MaxPool(2)               & Linear(3)            &                           \\
Linear(50) + ReLU  & Linear(128) + ReLU &                           &                           \\
Linear(3)           & Linear(3)           &                           &                           \\ \hline
\end{tabular}
\caption{The architectures of local models for the pneumonia detection task.}
\label{table:network_structure_xray}
\end{center}
\end{table}

\begin{table}[h]
\begin{center}
\resizebox{0.99\textwidth}{!}{
\begin{tabular}{c|c|c|c|c|c|c|c|c|c}
\hline
Client 1         & Client 2         & Client 3         & Client 4         & Client 5         & Client 6         & Client 7     & Client 8     & Client 9     & Client 10    \\ \hline
Conv(10, 5, 0) & Conv(10, 5, 0) & Conv(10, 3, 1) & Conv(10, 3, 1) & Conv(10, 5, 0) & Conv(10, 5, 0) & Linear(1024) & Linear(1024) & Linear(1024) & Linear(1024) \\
ReLU             & ReLU             & ReLU             & ReLU             & ReLU             & ReLU             & ReLU         & ReLU         & ReLU         & ReLU         \\
MaxPool(2)       & MaxPool(2)       & MaxPool(2)       & MaxPool(2)       & MaxPool(2)       & MaxPool(2)       & Linear(512)  & Linear(512)  & Linear(1024) & Linear(1024) \\
Conv(20, 5, 0) & Conv(20, 5, 0) & Conv(20, 3, 1) & Conv(20, 3, 1) & Conv(20, 3, 1) & Conv(20, 3, 1) & ReLU         & ReLU         & ReLU         & ReLU         \\
ReLU             & ReLU             & ReLU             & ReLU             & ReLU             & ReLU             & Linear(256)  & Linear(256)  & Linear(10)   & Linear(10)   \\
MaxPool(2)       & MaxPool(2)       & MaxPool(2)       & MaxPool(2)       & MaxPool(2)       & MaxPool(2)       & ReLU         & ReLU         &              &              \\
Linear(50)       & Linear(50)       & Linear(128)      & Linear(128)      & Linear(64)       & Linear(64)       & Linear(10)   & Linear(10)   &              &              \\
ReLU             & ReLU             & ReLU             & ReLU             & ReLU             & ReLU             &              &              &              &              \\
Linear(10)       & Linear(10)       & Linear(10)       & Linear(10)       & Linear(10)       & Linear(10)       &              &              &              &              \\ \hline
\end{tabular}
}
\caption{The architectures of local models for MNIST and Fashion MNIST classification tasks. 
}
\label{table:network_structure_mnist_fashion_mnist}
\end{center}
\end{table}

\begin{table}[h]
\begin{center}
\resizebox{0.99\textwidth}{!}{
\begin{tabular}{c|c|c|c|c|c|c|c|c|c}
\hline
Client 1        & Client 2        & Client 3        & Client 4        & Client 5        & Client 6        & Client 7        & Client 8        & Client 9        & Client 10       \\ \hline
ResNet$^{*}$-18 & ResNet$^{*}$-18 & ResNet$^{*}$-18 & ResNet$^{*}$-18 & ResNet$^{*}$-34 & ResNet$^{*}$-34 & ResNet$^{*}$-34 & ResNet$^{*}$-34 & ResNet$^{*}$-50 & ResNet$^{*}$-50 \\
Linear(128)     & Linear(128)     & Linear(256)     & Linear(256)     & Linear(128)     & Linear(128)     & Linear(256)     & Linear(256)     & Linear(256)     & Linear(256)     \\
ReLU            & ReLU            & ReLU            & ReLU            & ReLU            & ReLU            & ReLU            & ReLU            & ReLU            & ReLU            \\
Linear(64)      & Linear(64)      & Linear(10)      & Linear(10)      & Linear(64)      & Linear(64)      & Linear(10)      & Linear(10)      & Linear(10)      & Linear(10)      \\
ReLU            & ReLU            &                 &                 & ReLU            & ReLU            &                 &                 &                 &                 \\
Linear(10)      & Linear(10)      &                 &                 & Linear(10)      & Linear(10)      &                 &                 &                 &                 \\ \hline
\end{tabular}
}
\caption{The architectures of local models for the CIFAR-10 classification task. ResNet$^{*}$ represents the layers of ResNet before the fully connected layer. These ResNet models are pretrained on the ImageNet dataset.}
\label{table:network_structure_cifar}
\end{center}
\end{table}

\begin{table}[t]
\begin{center}
\begin{tabular}{c|cccc}
\hline
                                     & COVIDx           & MNIST & Fashion MNIST & CIFAR-10 \\ \hline
Number of local data                & 1000 $\sim$ 2000 & 5,400 & 5,400         & 4,000    \\
Local training batch size         & 64               & 64    & 64            & 64       \\
Number of proxy data                & 1,500            & 6,000 & 6,000         & 10,000   \\
Distillation batch size & 128              & 512   & 512           & 32       \\ \hline
\end{tabular}
\caption{Summary of datasets.}
\label{table:summary_dataset}
\end{center}
\end{table}

\subsection*{More Discussion on Data Heterogenity}

In the main text, we evaluated the performance of Selective-FD in the strong non-IID and weak non-IID settings, where each client only has one or two classes of samples.
In this part, we consider more general non-IID settings by simulating the non-IID distribution based on the Dirichlet distribution $\mathrm{Dir}_{K}(\beta)$.
The parameter $K$ represents the number of clients, and $\beta > 0$ is a concentration parameter.
When $\beta$ is set to a smaller value, the data distribution is more non-IID.
Specifically, we sample a vector $p_{n} \sim \mathrm{Dir}_{K}(\beta)$ and allocate a $p_{n,k}$ proportion of the instances of class $n$ to client $k$.
We conduct a performance comparison of the proposed Selective-FD method with two representative baselines, namely FedMD and IndepLearn, across various non-IID settings of the MNIST dataset.
As shown in Fig. \ref{fig:dirichlet}, the accuracy of IndepLearn degrades with the decreasing of parameter $\beta$.
This is because the data distribution becomes increasingly non-IID.
The FedMD method demonstrates the ability to achieve good performance when $\beta > 10^{-1}$. 
However, it experiences performance degradation when $\beta <10^{-1}$.
In contrast, our Selective-FD method consistently maintains satisfactory performance, even when the parameter $\beta$ decreases to $10^{-3}$.
Specifically, the accuracy gain of our method is more significant when using hard labels for knowledge distillation.

\begin{figure*}[h]
\centering
\subfigure[Sharing hard labels]{
\centering
\includegraphics[width=0.45\textwidth]{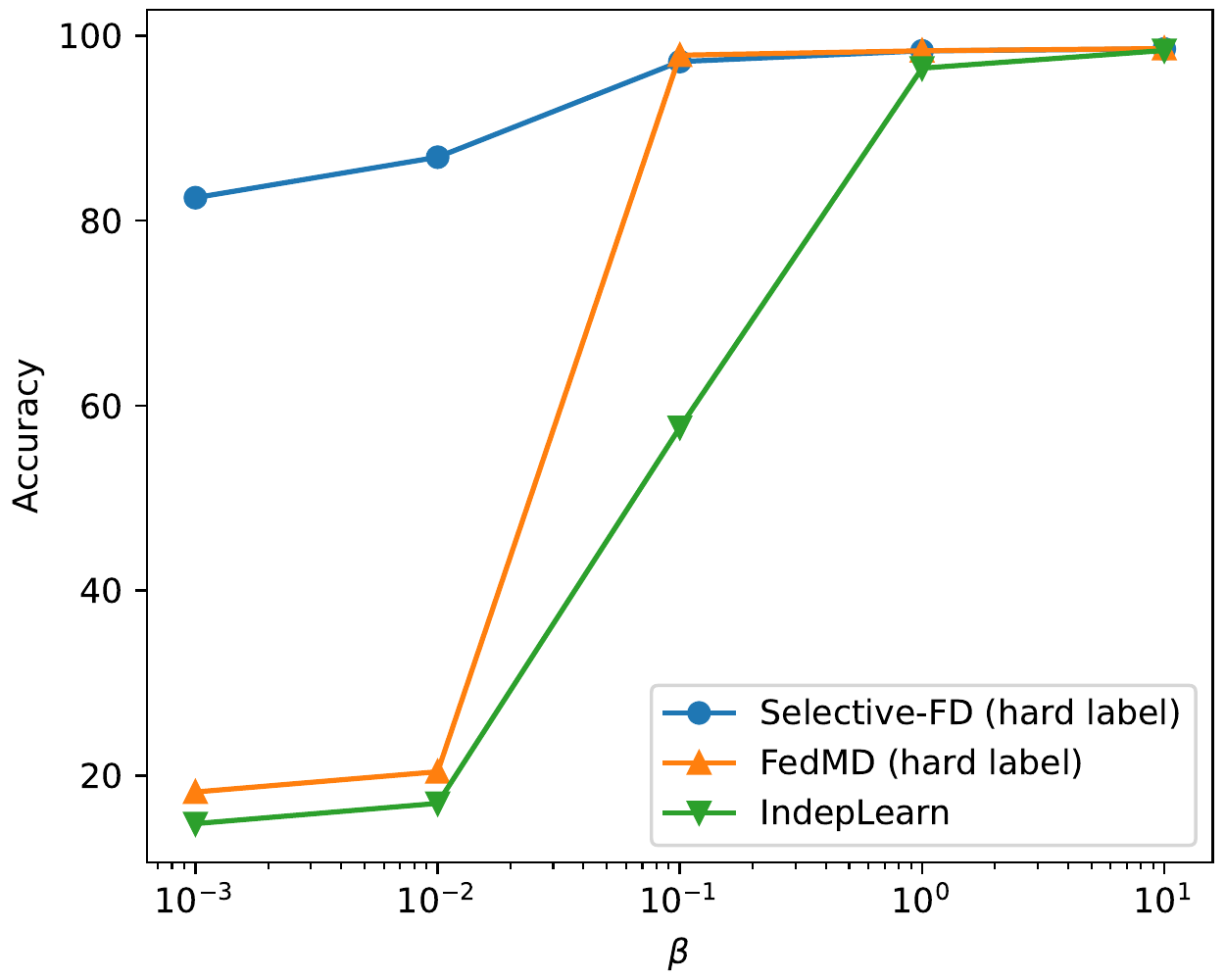}
}
\subfigure[Sharing soft labels]{
\centering
\includegraphics[width=0.45\textwidth]{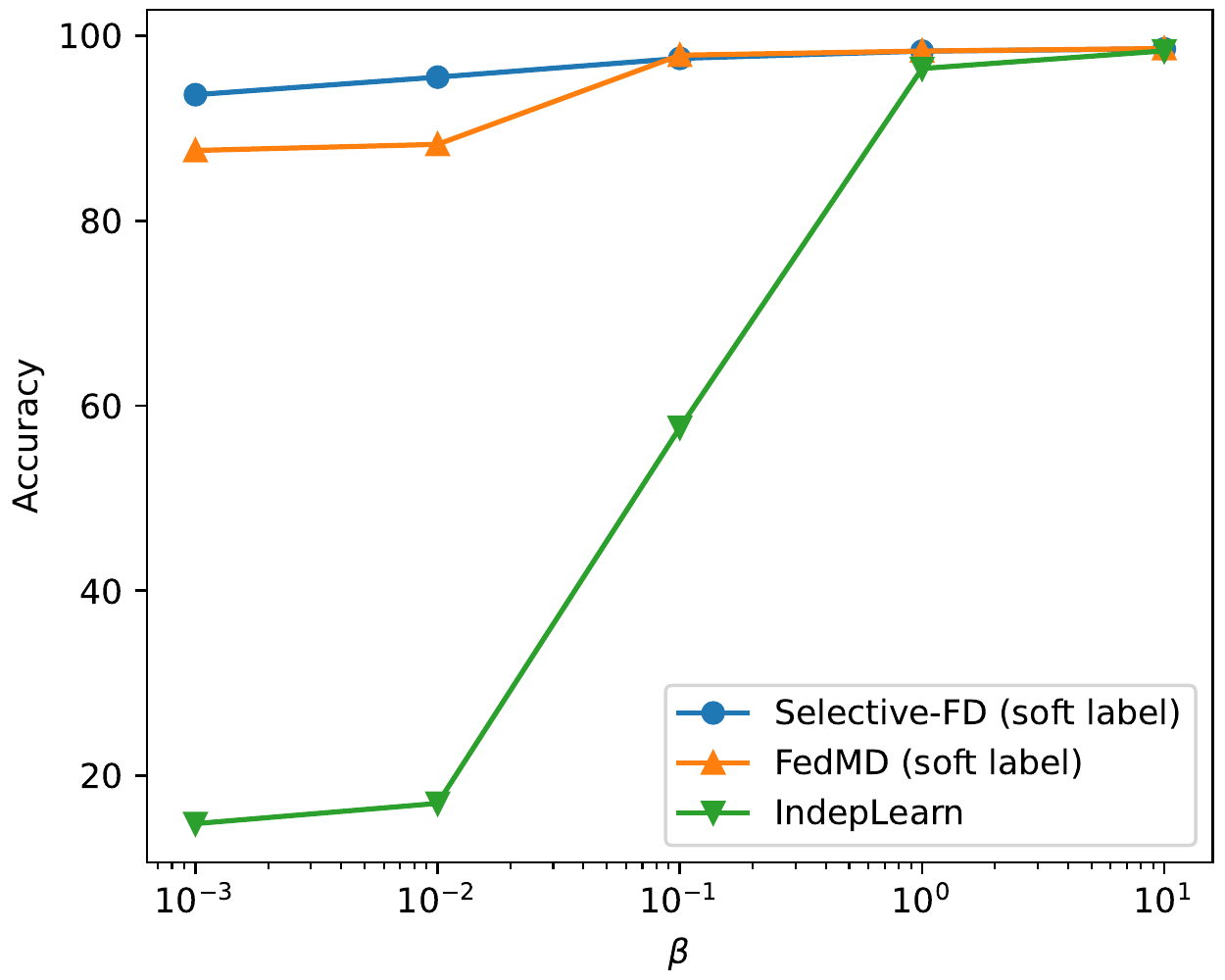}
}
\caption{MNIST classification accuracy in different non-IID settings. 
When $\beta$ is set to a smaller value, the data distribution is more non-IID.
The knowledge is transferred via (a) hard labels and (b) soft labels.
}
\label{fig:dirichlet}
\end{figure*}

\subsection*{Proof of Theorem 2}

Prior to proving Theorem 2, we first present two Lemmas.

\begin{lemma} 
\label{thm:g_distance}
Given hypothesis spaces $\hat{\mathcal{H}}:=\{\hhat: \mathcal{X} \rightarrow V(\Delta^{C-1})\}$ and $\mathcal{G}:= \{g: \mathcal{X} \rightarrow \{0,1\} \}$ with $g(\mathbf{x}) = \frac{1}{2} \|\hhat(\mathbf{x}) - \hhat^{\prime}(\mathbf{x}) \|_{1}$ for $\hhat, \hhat^{\prime} \in \hat{\mathcal{H}}$, we have $| \mathcal{L}_{\mathcal{D}}(\hhat,\hhat^{\prime}) - \mathcal{L}_{\mathcal{D}^{\prime}}(\hhat,\hhat^{\prime}) | \leq d_{\mathcal{G}}(\mathcal{D},\mathcal{D}^{\prime})$ for $\hhat,\hhat^{\prime} \in \hat{\mathcal{H}}$.
\end{lemma}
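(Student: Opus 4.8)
The plan is to exploit the fact that every hypothesis in $\hat{\mathcal{H}}$ outputs a vertex of the simplex, i.e. a one-hot vector, so that the $\ell_{1}$ loss collapses into a scaled disagreement probability and the claimed inequality becomes an immediate consequence of the supremum appearing in the definition of $d_{\mathcal{G}}$. First I would observe that for $\hhat,\hhat^{\prime}\in\hat{\mathcal{H}}$ the values $\hhat(\mathbf{x}),\hhat^{\prime}(\mathbf{x})$ are one-hot vectors, so $\|\hhat(\mathbf{x})-\hhat^{\prime}(\mathbf{x})\|_{1}$ equals $0$ when the two predictions coincide and $2$ otherwise. Consequently $g(\mathbf{x})=\frac{1}{2}\|\hhat(\mathbf{x})-\hhat^{\prime}(\mathbf{x})\|_{1}=\mathbf{1}\{\hhat(\mathbf{x})\neq\hhat^{\prime}(\mathbf{x})\}$ is a genuine element of $\mathcal{G}$ taking values in $\{0,1\}$, and the identity $\|\hhat(\mathbf{x})-\hhat^{\prime}(\mathbf{x})\|_{1}=2g(\mathbf{x})$ holds pointwise.

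Next I would rewrite each loss as an expectation of this indicator: $\mathcal{L}_{\mathcal{D}}(\hhat,\hhat^{\prime})=\mathbb{E}_{\mathbf{x}\sim\mathcal{D}}\|\hhat(\mathbf{x})-\hhat^{\prime}(\mathbf{x})\|_{1}=2\,\mathbb{E}_{\mathbf{x}\sim\mathcal{D}}[g(\mathbf{x})]=2\operatorname{Pr}_{\mathcal{D}}[g(\mathbf{x})=1]$, and analogously over $\mathcal{D}^{\prime}$. Subtracting and taking absolute values then yields $|\mathcal{L}_{\mathcal{D}}(\hhat,\hhat^{\prime})-\mathcal{L}_{\mathcal{D}^{\prime}}(\hhat,\hhat^{\prime})|=2\,|\operatorname{Pr}_{\mathcal{D}}[g(\mathbf{x})=1]-\operatorname{Pr}_{\mathcal{D}^{\prime}}[g(\mathbf{x})=1]|$. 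Since this particular $g$ belongs to $\mathcal{G}$, I would upper bound the right-hand side by the supremum over all of $\mathcal{G}$, which is precisely $d_{\mathcal{G}}(\mathcal{D},\mathcal{D}^{\prime})$ by the $\mathcal{G}$-distance definition, thereby closing the argument.

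There is no serious obstacle here; the lemma is essentially a bookkeeping identity once the one-hot structure is recognized. The only point requiring a little care is confirming that the constructed $g$ truly lies in $\mathcal{G}$, namely that $\mathbf{x}\mapsto\frac{1}{2}\|\hhat(\mathbf{x})-\hhat^{\prime}(\mathbf{x})\|_{1}$ is $\{0,1\}$-valued, which is exactly what the vertex property of $V(\Delta^{C-1})$ guarantees, together with noting that the supremum defining $d_{\mathcal{G}}$ ranges over this same class $\mathcal{G}$ so that the chosen $g$ is an admissible competitor.
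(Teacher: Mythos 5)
Your proof is correct and follows essentially the same route as the paper: both identify $g(\mathbf{x})=\frac{1}{2}\|\hhat(\mathbf{x})-\hhat^{\prime}(\mathbf{x})\|_{1}$ as the disagreement indicator, rewrite the loss as $2\operatorname{Pr}_{\mathcal{D}}[g(\mathbf{x})=1]$, and bound the difference of losses by the supremum defining $d_{\mathcal{G}}(\mathcal{D},\mathcal{D}^{\prime})$, since the particular $g$ built from $(\hhat,\hhat^{\prime})$ is an admissible competitor. If anything, your explicit check that the one-hot (vertex) structure makes $g$ genuinely $\{0,1\}$-valued is spelled out more carefully than in the paper's one-line computation.
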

\begin{proof}\renewcommand{\qedsymbol}{}
\begin{align}
d_{\mathcal{G}}(\mathcal{D},\mathcal{D}^{\prime}) & = 2 \sup _{g \in \mathcal{G}}\left|\operatorname{Pr}_{\mathcal{D}}[g(\mathbf{x}) = 1]-\operatorname{Pr}_{\mathcal{D}^{\prime}}[g(\mathbf{x}) = 1]\right|, \\
& = \sup_{\hhat,\hhat^{\prime} \in \hat{\mathcal{H}}} 2 \left|\frac{1}{2} \mathbb{E}_{\mathbf{x}\sim \mathcal{D}}[\hhat(\mathbf{x}) - \hhat^{\prime}(\mathbf{x})] - \frac{1}{2} \mathbb{E}_{\mathbf{x}\sim \mathcal{D}^{\prime}}[\hhat(\mathbf{x}) - \hhat^{\prime}(\mathbf{x})]  \right| \geq | \mathcal{L}_{\mathcal{D}}(\hhat,\hhat^{\prime}) - \mathcal{L}_{\mathcal{D}^{\prime}}(\hhat,\hhat^{\prime}) |.
\end{align}
\end{proof}
\begin{lemma}
\label{thm:Hoff}
For any $\delta \in (0,1)$, with probability at least $1-\delta$ over the choice of the samples, we have
\begin{equation}
\mathcal{L}_{\mathcal{D}_{\normalfont\text{test}}} (\normalfont\hhat,\hhat^{*})  \leq  \mathcal{L}_{\normalfont\Dkhat \cup \Dproxyhat }(\hhat) + \sqrt{\left( \frac{2\alpha^{2}}{m_{k}} + \frac{2(1-\alpha)^{2}}{m_{\normalfont\text{proxy}}} \right) \log\frac{2}{\delta} }.
\end{equation}
\end{lemma}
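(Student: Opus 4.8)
The plan is to prove Lemma \ref{thm:Hoff} by decomposing the generalization gap over the weighted empirical loss and then applying a concentration inequality. The key observation is that the training loss $\mathcal{L}_{\Dkhat \cup \Dproxyhat }(\hhat)$ is an average of bounded random quantities: each term in $\mathcal{L}_{\mathcal{D}}(\hhat,\hhat^{\prime}):= \mathbb{E}_{\mathbf{x} \sim \mathcal{D}}\|\hat{\bm{h}}(\mathbf{x}) - \hat{\bm{h}}^{\prime}(\mathbf{x}) \|_{1}$ involves the $\ell_{1}$ distance between two vectors in $V(\Delta^{C-1})$, which lies in $[0,2]$ since both are one-hot vectors. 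This boundedness is what makes a Hoeffding-type argument applicable.

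First I would write the weighted empirical loss explicitly as $\mathcal{L}_{\Dkhat \cup \Dproxyhat }(\hhat) = \alpha \mathcal{L}_{\hat{\mathcal{D}}_{k}}(\hhat,\hhat^{*}) + (1 - \alpha) \mathcal{L}_{\hat{\mathcal{D}}_{\text{proxy}}}(\hhat,\hhat^{*})$, recalling the definition given in the Notations section. Next I would introduce the random variable $Z = \mathcal{L}_{\Dkhat \cup \Dproxyhat }(\hhat)$ viewed as a function of the $m_{k}$ local samples and $m_{\tProxy}$ proxy samples, and I would verify that its expectation equals the weighted population loss $\alpha \mathcal{L}_{\mathcal{D}_{k}}(\hhat,\hhat^{*}) + (1-\alpha)\mathcal{L}_{\mathcal{D}_{\tProxy}}(\hhat,\hhat^{*})$. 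The goal is then to control the deviation between the population-weighted loss and $\mathcal{L}_{\mathcal{D}_{\text{test}}}(\hhat,\hhat^{*})$, but crucially Lemma \ref{thm:Hoff} as stated only bounds the test loss by the empirical loss plus a concentration term, so I would apply a McDiarmid (bounded-differences) argument rather than needing the test distribution to match directly here.

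The central step is the bounded-differences computation: perturbing a single local sample changes $Z$ by at most $\frac{2\alpha}{m_{k}}$ (since the per-sample $\ell_1$ term is in $[0,2]$ and carries weight $\alpha/m_{k}$), and perturbing a single proxy sample changes $Z$ by at most $\frac{2(1-\alpha)}{m_{\tProxy}}$. I would then invoke McDiarmid's inequality, whose sum-of-squared-sensitivities term is $m_{k}\left(\frac{2\alpha}{m_{k}}\right)^{2} + m_{\tProxy}\left(\frac{2(1-\alpha)}{m_{\tProxy}}\right)^{2} = \frac{4\alpha^{2}}{m_{k}} + \frac{4(1-\alpha)^{2}}{m_{\tProxy}}$. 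Setting the one-sided deviation probability equal to $\delta$ and solving yields the stated radius $\sqrt{\left( \frac{2\alpha^{2}}{m_{k}} + \frac{2(1-\alpha)^{2}}{m_{\text{proxy}}} \right) \log\frac{2}{\delta}}$, matching the factor of $2$ inside the square root.

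The main obstacle I anticipate is reconciling the statement's left-hand side, which is the \emph{test} loss $\mathcal{L}_{\mathcal{D}_{\text{test}}}(\hhat,\hhat^{*})$, with an empirical quantity measured on local and proxy samples whose distributions differ from the test distribution. A naive McDiarmid bound only relates the empirical loss to its own population expectation, not to the test loss. I expect the resolution is that this Lemma is meant to be combined with Lemma \ref{thm:g_distance} and the minimum-combined-loss definitions downstream in the proof of Theorem \ref{thm:upper_bound}, where the distribution-shift terms $d_{\mathcal{G}_{k}}(\cdot,\Dtest)$ and $\lambda_{k}$ absorb the mismatch; hence for the Lemma itself I would either assume the bound is stated under an implicit identification of the relevant losses, or interpret $\mathcal{L}_{\mathcal{D}_{\text{test}}}$ on the left as shorthand for the population-weighted training loss that McDiarmid directly controls. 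Clarifying precisely which population quantity appears is the delicate point, after which the concentration calculation is routine.
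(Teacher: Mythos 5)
Your proposal is correct and takes essentially the same route as the paper: the paper rewrites $\mathcal{L}_{\Dkhat \cup \Dproxyhat}(\hhat)$ as a single average of independent variables bounded by $\frac{2\alpha(m_{k}+m_{\text{proxy}})}{m_{k}}$ and $\frac{2(1-\alpha)(m_{k}+m_{\text{proxy}})}{m_{\text{proxy}}}$ and applies Hoeffding's inequality, which is the identical computation to your McDiarmid bound with sensitivities $\frac{2\alpha}{m_{k}}$ and $\frac{2(1-\alpha)}{m_{\text{proxy}}}$, giving the same two-sided tail $2 \exp\bigl(-\varepsilon^{2}/(\frac{2\alpha^{2}}{m_{k}}+\frac{2(1-\alpha)^{2}}{m_{\text{proxy}}})\bigr)$ and the same radius (your only slip is calling it a one-sided deviation; the factor $2$ in $\log\frac{2}{\delta}$ comes from the two-sided bound you actually used). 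Your diagnosis of the left-hand side is also exactly right: the paper's own proof only controls $|\mathcal{L}_{\Dk \cup \Dproxy}(\hhat) - \mathcal{L}_{\Dkhat \cup \Dproxyhat}(\hhat)|$, and the passage from this population-weighted loss to $\mathcal{L}_{\mathcal{D}_{\text{test}}}$ is deferred to the triangle-inequality steps with the $\lambda_{k}$, $\lambda_{k,\text{proxy}}$ and $d_{\mathcal{G}_{k}}$ terms in the proof of Theorem \ref{thm:upper_bound}, precisely as you anticipated.
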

\begin{proof}\renewcommand{\qedsymbol}{}
The loss $ {\mathcal{L}}_{\Dkhat \cup \Dproxyhat }(\hhat) $ can be written as
\begin{align}
    & {\mathcal{L}}_{\Dkhat \cup \Dproxyhat }(\hhat)  = \alpha {\mathcal{L}}_{\Dkhat}(\hhat) + (1-\alpha)  {\mathcal{L}}_{\Dproxyhat}(\hhat),  \\
    & = \frac{1}{m_{k} + m_{\text{proxy}}} \left[ \sum_{\mathbf{x} \in \hat{\mathcal{D}}_{k}} \frac{\alpha(m_{k} + m_{\text{proxy}})}{m_{k}} \| \hhat(\mathbf{x}) - \hhat^{*}(\mathbf{x}) \|_{1}  + \sum_{\mathbf{x} \in \hat{\mathcal{D}}_{\text{proxy}}} \frac{(1-\alpha)(m_{k} + m_{\text{proxy}})}{m_{\text{proxy}}} \| \hhat(\mathbf{x}) - \hhat^{*}(\mathbf{x}) \|_{1} \right]. \label{equ:loss_uses_Hoeffding}
\end{align}
Let $X_{1}^{(k)},\ldots, X_{m_{k}}^{(k)}$ and $X_{1}^{(\tProxy)},\ldots,X_{m_{\tProxy}}^{(\tProxy)}$ be 
independent random variables that take on the values of $\frac{\alpha(m_{k} + m_{\text{proxy}})}{m_{k}} \| \hhat(\mathbf{x}) - \hhat^{*}(\mathbf{x}) \|_{1}$ for $\mathbf{x} \in \hat{\mathcal{D}}_{k}$ and $\frac{(1-\alpha)(m_{k} + m_{\text{proxy}})}{m_{\text{proxy}}} \| \hhat(\mathbf{x}) - \hhat^{*}(\mathbf{x}) \|_{1}$ for $\mathbf{x} \in \hat{\mathcal{D}}_{\tProxy}$, respectively.
We define $\overline{X}$ as the mean value of these variables, which represents the empirical loss in (\ref{equ:loss_uses_Hoeffding}).
By linearity of expectations, $\mathbb{E}[\overline{X}]$ is equal to the loss $\mathcal{L}_{\mathcal{D}_{k} \cup \mathcal{D}_{\tProxy} }(\hhat)$.
According to Hoeffding's inequality, for any $\epsilon >0$, we have
\begin{equation}
\mathrm{Pr}\left[| \mathcal{L}_{\Dk \cup \Dproxy }(\hhat) -  \mathcal{L}_{\Dkhat \cup \Dproxyhat }(\hhat) | \geq \varepsilon \right] \leq 2 \exp \left( \frac{-\varepsilon^{2}}{\frac{2\alpha^{2}}{m_{k}} + \frac{2(1-\alpha)^{2}}{m_{\text{proxy}}} } \right). \label{equ:hoeffding_inequality}
\end{equation}
Let the right-hand side of (\ref{equ:hoeffding_inequality}) be $\delta$.
We can derive the inequality in Theorem \ref{thm:Hoff}.
\end{proof}
\noindent Denote $ 
\hat{\bm{h}}_{k}^{*} = \arg\min_{\hat{\bm{h}}_{k}\in\hat{\mathcal{H}}_{k}}\left\{\mathcal{L}_{\mathcal{D}_{\text{test}}}(\hhat_{k},\hhat^{*}) + \mathcal{L}_{\mathcal{D}_{k}}(\hhat_{k},\hhat^{*})\right\}$ and $ \hat{\bm{h}}_{k,\text{proxy}}^{*} = \arg\min_{\hhat_{k} \in \hat{\mathcal{H}}_{k}}\left\{\mathcal{L}_{\mathcal{D}_{\text{test}}}(\hhat_{k},\hhat^{*}) + \mathcal{L}_{\mathcal{D}_{\text{proxy}}}(\hhat_{k},\hhat^{*})\right\}$.
We are now ready to prove Theorem 2.

\begin{proof}\renewcommand{\qedsymbol}{}
The following derives the upper bound of $|\mathcal{L}_{\mathcal{D}_{\text{test}}} 
(\hhat,\hhat^{*}) - \mathcal{L}_{\mathcal{D}_{k} \cup \mathcal{D}_{\text{proxy}}} (\hhat)|$:
\begin{align}
& |\mathcal{L}_{\mathcal{D}_{\text{test}}} 
(\hhat,\hhat^{*}) - \mathcal{L}_{\mathcal{D}_{k} \cup \mathcal{D}_{\text{proxy}}} (\hhat)| =  |\mathcal{L}_{\mathcal{D}_{\text{test}}} 
(\hhat,\hhat^{*}) - \alpha \mathcal{L}_{\mathcal{D}_{k}}(\hhat,\hhat^{*})  - (1-\alpha) \mathcal{L}_{\mathcal{D}_{\text{proxy}}}(\hhat,\hProxyStar) |,  \\ 
{\leq} & \alpha | \mathcal{L}_{\mathcal{D}_{\text{test}}} 
(\hat{\bm{h}},\hhat^{*}) -  \mathcal{L}_{\mathcal{D}_{k}}(\hhat,\hhat^{*})| + (1 - \alpha) |\mathcal{L}_{\mathcal{D}_{\text{test}}} 
(\hat{\bm{h}},\hhat^{*}) - \mathcal{L}_{\mathcal{D}_{\text{proxy}}}(\hhat,\hProxyStar) |,  \\ 
\leq & \alpha \left[  | \mathcal{L}_{\mathcal{D}_{k}}(\hhat,\hhat^{*}) - \mathcal{L}_{\mathcal{D}_{k}}(\hhat, \hhat_{k}^{*}) | + | \mathcal{L}_{\mathcal{D}_{k}}(\hhat, \hhat_{k}^{*}) - \mathcal{L}_{\mathcal{D}_{\text{test}}}(\hhat, \hhat_{k}^{*}) | + |\mathcal{L}_{\mathcal{D}_{\text{test}}}(\hhat, \hhat_{k}^{*}) - \mathcal{L}_{\Dtest}(\hhat,\hhat^{*})  | \right],  \\ 
& + (1-\alpha) \left[  | \mathcal{L}_{\Dproxy}(\hhat,\hProxyStar) - \mathcal{L}_{\Dproxy}(\hhat, \hhat_{k,\text{proxy}}^{*}) | + | \mathcal{L}_{\Dproxy}(\hhat, \hhat_{k,\text{proxy}}^{*}) - \mathcal{L}_{\mathcal{D}_{\text{test}}}(\hhat, \hhat_{k,\text{proxy}}^{*}) | \right.  \\
&  +  \left. |\mathcal{L}_{\mathcal{D}_{\text{test}}}(\hhat, \hhat_{k,\text{proxy}}^{*}) - \mathcal{L}_{\Dtest}(\hhat,\hhat^{*})  | \right],  \\ 
\leq & \alpha \left[  \mathcal{L}_{\mathcal{D}_{k}}(\hhat_{k}^{*},\hhat^{*}) + | \mathcal{L}_{\mathcal{D}_{k}}(\hhat, \hhat_{k}^{*}) - \mathcal{L}_{\mathcal{D}_{\text{test}}}(\hhat, \hhat_{k}^{*}) | +  \mathcal{L}_{\Dtest}(\hhat_{k}^{*},\hhat^{*})   \right], \label{equ:triangle_1} \\ 
& + (1-\alpha) \left[  \mathcal{L}_{\Dproxy}(\hhat_{k,\text{proxy}}^{*},\hProxyStar) + | \mathcal{L}_{\Dproxy}(\hhat, \hhat_{k,\text{proxy}}^{*}) - \mathcal{L}_{\mathcal{D}_{\text{test}}}(\hhat, \hhat_{k,\text{proxy}}^{*}) | +  \mathcal{L}_{\Dtest}(\hhat_{k,\text{proxy}}^{*},\hhat^{*})   \right], \label{equ:triangle_2} \\ 
\leq & \alpha  \left[\lambda_{k} + d_{\mathcal{G}_{k}}(\Dk,\Dtest) \right] + (1 - \alpha) \biggr[ \lambda_{k,\text{proxy}} + d_{\mathcal{G}_{k}}(\Dproxy,\Dtest)  + \mathcal{L}_{\Dproxy}(\hhat_{k,\text{proxy}}^{*},\hProxyStar) - \mathcal{L}_{\Dproxy}(\hhat_{k,\text{proxy}}^{*},\hhat^{*}) \biggr], \label{equ:g_distance_upper_bound} \\
\leq & \alpha  \left[\lambda_{k} + d_{\mathcal{G}_{k}}(\Dk,\Dtest) \right] + (1 - \alpha) \biggr[ \lambda_{k,\text{proxy}} +  d_{\mathcal{G}_{k}}(\Dproxy,\Dtest) +  \mathcal{L}_{\Dproxy}(\bm{h}_{\tProxy}^{*},\hhat^{*}) \biggr] \label{equ:triangle_3},
\end{align}
where 
\begin{equation}
\mathcal{L}_{\Dproxy}(\bm{h}_{\tProxy}^{*},\hhat^{*}) = p_{\tProxy}^{(1)}\mathcal{L}_{\mathcal{D}_{\tProxy}^{(1)}}(\hhat^{*},\bm{h}_{\text{proxy}}^{*}) + p_{\tProxy}^{(2)}\mathcal{L}_{\mathcal{D}_{\tProxy}^{(2)}}(\hhat_{\text{proxy}}^{*}, \bm{h}_{\text{proxy}}^{*}).
\label{equ:prove_upper_bound}
\end{equation}
The triangle inequality gives rise to (\ref{equ:triangle_1}), (\ref{equ:triangle_2}), and (\ref{equ:triangle_3}), while Lemma \ref{thm:g_distance} underlies the inequality (\ref{equ:g_distance_upper_bound}). By combining (\ref{equ:triangle_3}), (\ref{equ:prove_upper_bound}), and Lemma \ref{thm:Hoff}, we obtain the upper bound stated in Theorem 2.

\end{proof}

\subsection*{Ablation Study on $\alpha$ in Theorem 2}

In this part, we investigate the effect of coefficient $\alpha$ on the error bound in Theorem 2.
We first identify the negligible terms.
Consider the deep learning model at client $k$ has enough parameters such that its hypothesis space $\hat{\mathcal{H}}_{k}$ contains the ground truth labeling function $\hat{\bm{h}}^{*}$.
In this case, both $\lambda_{k}$ and $\lambda_{k, \text{proxy}}$ hold a value of zero.
Besides, the numerical constraint tends to be zero given sufficient training samples.
Furthermore, the proxy dataset $\mathcal{D}_{\text{proxy}}$ for knowledge distillation is expected to be less heterogeneous compared with the local heterogeneous dataset $\mathcal{D}_{k}$ at client $k$.
Thus, the distance $\mathcal{G}_{k}(\mathcal{D}_{\text{proxy}},\mathcal{D}_{\text{test}})$ is much smaller than $\mathcal{G}_{k}(\mathcal{D}_{k},\mathcal{D}_{\text{test}})$.
If the proxy dataset follows the same distribution as the test set, the $\mathcal{G}_{k}(\mathcal{D}_{\text{proxy}},\mathcal{D}_{\text{test}})$ distance equals to zero.

Now it is clear that when $\alpha$ approaches 1, the empirical risk and the distance $\mathcal{G}_{k}(\mathcal{D}_{k},\mathcal{D}_{\text{test}})$ dominate the error bound.
When $\alpha$ is close to 0, the empirical risk, misleading knowledge, and ambiguous knowledge become the dominant factors.
To support this analysis, we conduct an ablation study on MNIST under a weak non-IID setting, comparing the classification error across various $\alpha$ values.
Our method utilizes the proposed selection mechanism as a pseudo-labeling function of the proxy dataset $\bm{h}_{\tProxy}^{*}$.
The local client $k$ trains its local model by minimizing the empirical risk $\alpha \mathcal{L}_{\hat{\mathcal{D}}_{k}}(\hat{\bm{h}}_{k},\hhat^{*}) + (1 - \alpha) \mathcal{L}_{\hat{\mathcal{D}}_{\text{proxy}}}(\hhat_{k},\bm{h}_{\tProxy}^{*})$.
To better assess the negative impact of misleading and ambiguous knowledge, we consider a baseline where the proxy dataset has ground truth labels $\hhat^{*}(\mathbf{x})$.
The client minimizes the combined loss $\alpha \mathcal{L}_{\hat{\mathcal{D}}_{k}}(\hat{\bm{h}}_{k},\hhat^{*}) + (1 - \alpha) \mathcal{L}_{\hat{\mathcal{D}}_{\text{proxy}}}(\hhat_{k},\hhat^{*})$ to train the local model.

As shown in Fig. \ref{fig:evaluate_alpha}, the test error rate of the baseline decreases monotonously as the $\alpha$ value decreases.
This is because a small $\alpha$ reduces the negative influence of the local heterogeneous dataset $\mathcal{D}_{k}$ on the training process.
In contrast, the error rate of our proposed method first decreases but then increases when  $\alpha$ approaches 0.
Notably, this error rate is consistently higher compared to that of the baseline.
These results can primarily be attributed to misleading and ambiguous knowledge, which degrades the training performance, particularly when $\alpha$ is close to 0.

\begin{figure}[h]
    \centering
    \includegraphics[width = 0.6\linewidth]{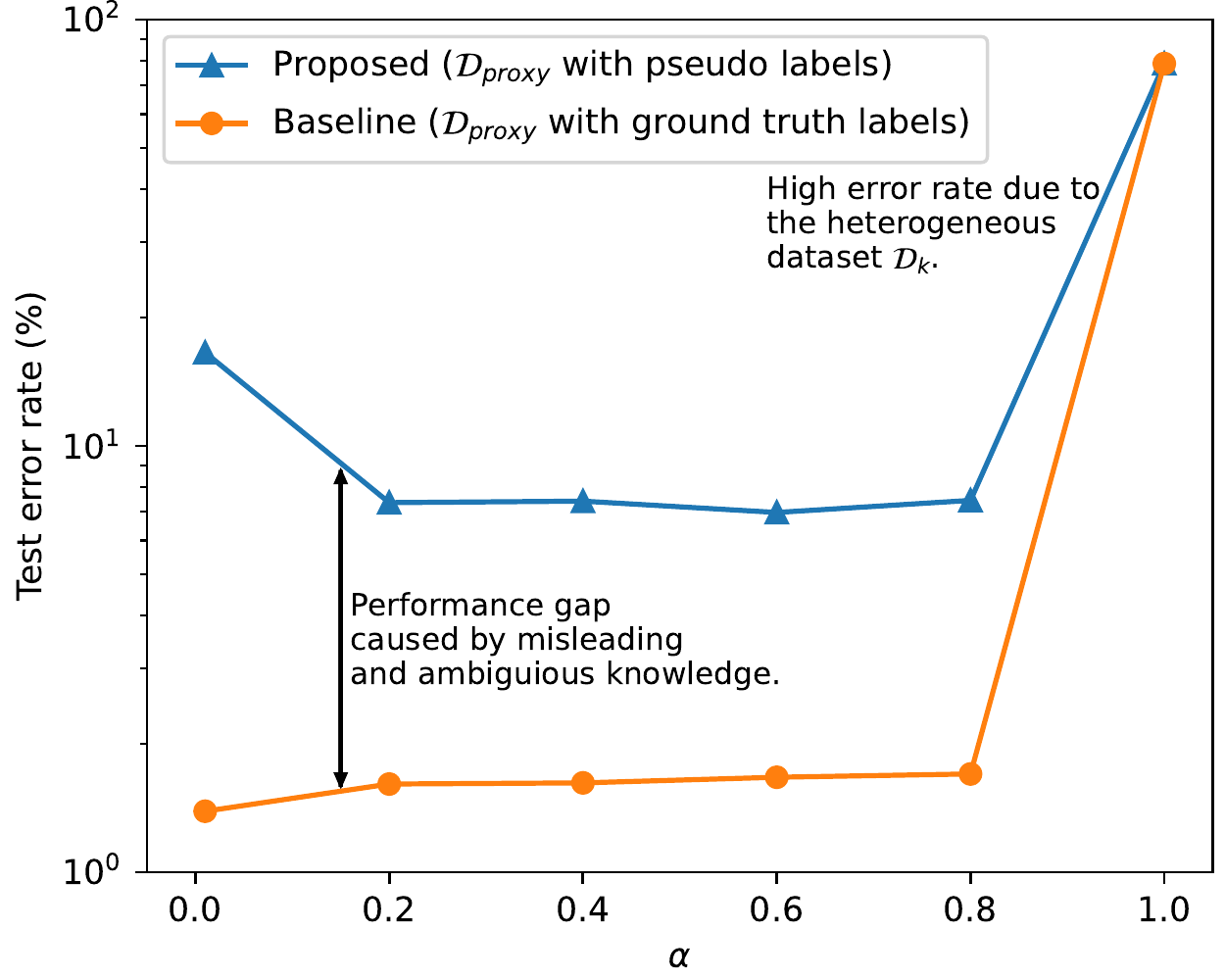}
    \caption{Test error rate as a function of $\alpha$.}
    \label{fig:evaluate_alpha}
\end{figure}

\end{document}